\documentclass{article}

\usepackage[utf8]{inputenc}
\usepackage[T1]{fontenc}
\usepackage{amsmath,amssymb,amsfonts}
\usepackage{algorithmic}
\usepackage{algorithm}
\usepackage{graphicx}
\usepackage{booktabs}
\usepackage{hyperref}
\usepackage{xcolor}
\usepackage{subcaption}
\usepackage{multirow}
\usepackage{microtype}
\usepackage{nicefrac}
\usepackage{amsthm}
\usepackage{bbm}
\usepackage{mathtools}
\usepackage{makecell}

\usepackage[accepted]{icml2026}

\theoremstyle{plain}
\newtheorem{theorem}{Theorem}[section]

\newtheorem{lemma}[theorem]{Lemma}
\newtheorem{corollary}[theorem]{Corollary}
\theoremstyle{definition}

\theoremstyle{remark}

\DeclareMathOperator{\supp}{supp}

\icmltitlerunning{Expander Sparse Autoencoders: Parameter-Efficient Dictionaries for Mechanistic Interpretability}

\begin{document}

\twocolumn[
\icmltitle{Expander Sparse Autoencoders: \\Parameter-Efficient Dictionaries for Mechanistic Interpretability}



  \icmlsetsymbol{equal}{*}

  \begin{icmlauthorlist}
    \icmlauthor{Rodrigo Mendoza-Smith}{equal,yyy}
  \end{icmlauthorlist}

  \icmlaffiliation{yyy}{Independent Researcher}

  \icmlcorrespondingauthor{Rodrigo Mendoza-Smith}{rms@isotropic.sh}

  \icmlkeywords{Machine Learning, ICML}

  \vskip 0.3in
]



\printAffiliationsAndNotice{}  

\begin{abstract}
Sparse autoencoders (SAEs) decompose internal activations of neural networks into sparse linear combinations of learned features by fitting an overcomplete dictionary $\mathbf{W}\in\mathbb{R}^{m\times n}$ with $m<n$, and inferring a sparse code $\mathbf{x}\in\mathbb{R}^n$ from $\mathbf{h}\approx\mathbf{W}\mathbf{x}$.
This inference problem closely resembles the canonical setup of compressed sensing, but dense decoders requires $\mathcal{O}(mn)$ learned values, which becomes costly at large feature counts.
We introduce \textbf{Expander SAEs}: TopK SAEs whose decoder and tied encoder are supported on a left-\(d\)-regular expander mask with $d\ll m$, learning only \(dn\) decoder values while keeping the sparse-coding problem \((m,n,k)\) fixed.
The same structure reduces storage and turns the matching-pursuit correlation step \(\mathbf{W}^\top \mathbf{r}\) in OMP into an \(\mathcal{O}(dn)\) gather-and-reduce operation.
Our experiments show that across Pythia-70M/160M, Qwen2.5-3B, and Llama-3.2-1B residual-stream activations, varying \(d\) traces a consistent storage--fidelity frontier, and that at the most compressed modern-LM setting, Qwen2.5-3B with \(d=7\) uses \(293\times\) fewer learned decoder values than the full dense decoder while retaining \(84\%\) of dense CE-loss recovered.
Control experiments show that the improved storage--fidelity tradeoff is driven by sparse, diverse decoder support structure rather than by fewer learned decoder values, and that when sparse and dense decoders are compared at matched parameter count, part of the remaining gap comes from encoder amortisation.
On the theoretical side, we show that expansion and column flatness are sufficient for identifiability of noiseless \(k\)-sparse codes, and we derive complementary sufficient conditions under which OMP recovers the support exactly.
\end{abstract}

\section{Introduction}
\label{sec:intro}

Mechanistic interpretability aims to reverse-engineer the internal representations of a neural network by decomposing them into human-understandable components~\citep{olah2020zoom,elhage2022superposition}.
A central hypothesis is \emph{superposition} which proposes that neural networks can represent more features than they have dimensions by exploiting the sparsity of feature activations~\citep{elhage2022superposition}.
Sparse autoencoders (SAEs) have emerged as a central tool for studying this phenomenon by decomposing the internal activations of neural networks into sparse linear combinations of interpretable features~\citep{cunningham2023sparse, bricken2023monosemanticity, gao2024scaling, templeton2024scaling}.
SAEs are built by learning an {\em encoder} that maps an activation vector $\mathbf{h} \in \mathbb{R}^m$ to sparse feature coefficients $\mathbf{x} \in \mathbb{R}^n$ with $n > m$ and a {\em decoder} which maps $\mathbf{x}$ back to activation space $\hat{\mathbf{h}} \in \mathbb{R}^m$.
Recent work~\citep{klindt2025superposition, oneill2025amortisation} has emphasised that SAE inference is a sparse-recovery problem: given a trained decoder $\mathbf{W}_{\mathrm{dec}}\in\mathbb{R}^{m\times n}$ with $m<n$ and an activation $\mathbf{h}\approx \mathbf{W}_{\mathrm{dec}} \mathbf{x}$, recover a sparse latent code $\mathbf{x}\in\mathbb{R}^n$. This is the regime studied in compressed sensing~\citep{candes2005decoding, donoho2006compressed}, where recovery is possible only because the latent vector is sparse and the measurement matrix has a suitable geometry.
Indeed, prior work has shown that replacing the encoder with classic compressed-sensing algorithms like orthogonal matching pursuit (OMP) \citep{tropp2007omp} or gradient pursuit \citep{blumensath2008gradient} can close the ``amortisation gap'' and improve reconstruction~\citep{gdm2024ito, oneill2025amortisation, costa2025mpsae}.

A niche line of work within compressed sensing known as {\em combinatorial compressed sensing} studies the problem of sparse recovery when $\mathbf{W}$ is the adjacency matrix of a bipartite left $d$-regular expander graph~\citep{berinde2008combining, jafarpour2009efficient, indyk2010sparse, mendozasmith2015expander}.
When $d \ll m$, these matrices are binary, sparse, and highly structured so they are cheap to store and enable efficient recovery algorithms with provable guarantees~\citep{xu2007efficient, berinde2008ssmp, mendozasmith2015expander, mendozasmith2017robust}.
Inspired by this line of work, we propose \textbf{Expander SAEs}: tied-weight TopK sparse autoencoders whose encoder and decoder dictionaries are masked by the adjacency matrix of a left $d$-regular expander graph. The mask reduces the number of learned decoder values from $mn$ to $dn$ while keeping the sparse-coding problem $(m, n, k)$ fixed.
Moreover, this same structure also turns the correlation step $\mathbf{W}_{\mathrm{dec}}^\top \mathbf{r}$ in matching-pursuit decoders into an $\mathcal{O}(dn)$ sparse gather-and-reduce rather than an $\mathcal{O}(mn)$ dense product. We make the following contributions.

\begin{itemize}
    \item \textbf{Architecture.} We introduce Expander SAEs, a parameter-efficient architecture that lets us separate dictionary quality from amortised-inference quality.

    \item \textbf{Storage-fidelity frontier.} We show that across four open language models (Pythia-70M/160M, Qwen2.5-3B, Llama-3.2-1B), varying the sparsity parameter of the dictionary ($d$) traces a smooth trade-off between learned decoder values and reconstruction fidelity, and that on Qwen2.5-3B, Expander-SAE ($d{=}7$) recovers $84\%$ of the full dense cross-entropy loss while using $293\times$ fewer learned decoder values.

    \item \textbf{Parallel OMP.} We propose a parallel implementation of OMP that exploits the left $d$-regular column structure, yielding a Pareto frontier of operating points between fast amortised encoder inference and high-fidelity iterative decoding.

    \item \textbf{Identifiability theory.} We motivate the architecture by proving that if the fixed mask expands every $2k$-feature subset and the learned decoder columns remain sufficiently flat on their supports, then every noiseless $k$-sparse code has a unique $k$-sparse explanation. Under a separate, stronger cumulative-coherence condition, OMP is also certified to recover the support exactly.

\end{itemize}

\paragraph{Related work.}
Our work is adjacent to a number of lines of research.
In SAE architectures, Gated~\citep{rajamanoharan2024gated}, JumpReLU~\citep{rajamanoharan2024jumping}, TopK~\citep{gao2024scaling}, BatchTopK~\citep{bussmann2024batchtopk}, Switch~\citep{mudide2024switchsae}, and Meta-SAEs~\citep{bussmann2025meta} all modify the encoder or sparsity mechanism while keeping the decoder dense.
To our knowledge, Expander SAEs are the first SAE architecture to evaluate fixed decoder column sparsity as a storage/interpretability constraint.
On the compressed-sensing for SAEs side, \citet{gdm2024ito} first applied gradient pursuit to frozen dense decoders, \citet{oneill2025amortisation} formalised the amortisation gap, and \citet{costa2025mpsae} developed matching-pursuit variants as inference procedures.
Closest to our sparse construction, \citet{ba2018ds2p} studied deep sparse generative models with sparse-column measurement matrices.

\section{Background}
\label{sec:background}

A transformer processes tokens through $L$ layers.
At layer $\ell$ it maintains a \emph{residual stream} vector $\mathbf{h}^{(\ell)} \in \mathbb{R}^m$ that encodes everything the model has computed about the current token up to that layer.
Each layer updates the residual stream via attention ($\mathrm{Attn}$) and MLP sub-layers according to
\begin{align}
\mathbf{h}^{(\ell+1/2)} &= \mathbf{h}^{(\ell)} + \mathrm{Attn}^{(\ell)}(\mathbf{h}^{(\ell)}), \\
\mathbf{h}^{(\ell+1)}   &= \mathbf{h}^{(\ell+1/2)} + \mathrm{MLP}^{(\ell)}(\mathbf{h}^{(\ell+1/2)}).
\end{align}
The \emph{superposition hypothesis}~\citep{elhage2022superposition} posits that $\mathbf{h}^{(\ell)}$ encodes far more than $m$ interpretable concepts simultaneously.
Each concept corresponds to a direction $\mathbf{w}_j \in \mathbb{R}^m$ in activation space, and the activation is approximately a sparse linear combination of these directions:
\begin{equation}
    \label{eq:superposition}
    \mathbf{h}^{(\ell)} \;\approx\; \sum_{j=1}^{n} x_j\,\mathbf{w}_j + \boldsymbol{\varepsilon}
    \;=\; \mathbf{W}\mathbf{x} + \boldsymbol{\varepsilon},
\end{equation}
where $\mathbf{W} = [\mathbf{w}_1 \mid \cdots \mid \mathbf{w}_n] \in \mathbb{R}^{m \times n}$ is a \emph{feature dictionary} with $m < n$ that is learned from data, $\mathbf{x} \in \mathbb{R}^n$ is a sparse feature-activation vector and $\boldsymbol{\varepsilon} \in \mathbb{R}^m$ is noise.
The latent vector is assumed to satisfy $\|\mathbf{x}\|_0 = k \ll n$, so decoding $\mathbf{x}$ from $\mathbf{h}^{(\ell)}$ falls within the class of problems studied by compressed-sensing. 

Applied to a residual-stream activation $\mathbf{h} \in \mathbb{R}^m$, a sparse autoencoder (SAE) simultaneously learns a pair of dictionaries $(\mathbf{W}_{\mathrm{enc}}, \mathbf{W}_{\mathrm{dec}}) \in \mathbb{R}^{n \times m} \times \mathbb{R}^{m \times n}$ and a pair of biases $(\mathbf{b}_{\mathrm{enc}}, \mathbf{b}_{\mathrm{dec}}) \in \mathbb{R}^n \times \mathbb{R}^m$ by modelling the activation's reconstruction as
\begin{equation}
    \label{eq:encoder-decoder}
    \tag{SAE}
    \left\{
    \begin{aligned}
        \mathbf{x}        &= \sigma\!\left(\mathbf{W}_{\mathrm{enc}}(\mathbf{h} - \mathbf{b}_{\mathrm{dec}}) + \mathbf{b}_{\mathrm{enc}}\right), \\
        \hat{\mathbf{h}}  &= \mathbf{W}_{\mathrm{dec}}\,\mathbf{x} + \mathbf{b}_{\mathrm{dec}},
    \end{aligned}
    \right.
\end{equation}
where $\sigma$ is a sparsifying nonlinearity such as ReLU$+\ell_1$~\citep{cunningham2023sparse,bricken2023monosemanticity}, TopK~\citep{gao2024scaling}, or JumpReLU~\citep{rajamanoharan2024jumping}.
Different SAE variants impose sparsity in different ways. ReLU SAEs often use an $\ell_1$ penalty, while TopK SAEs impose a hard cardinality constraint by retaining only the largest $k$ pre-activations. In this paper, sparsity is enforced by TopK retaining the $k$ largest pre-activations and zeroing the rest, with no additional sparsity penalty in the training loss.
At inference, given a fresh residual-stream activation $\mathbf{h} \in \mathbb{R}^m$ and the parameters of a trained model \eqref{eq:encoder-decoder} we solve the sparse recovery problem
\begin{equation}
    \label{eq:decoding}
    \min_{\mathbf{x} \in \mathbb{R}^n}\;
    \bigl\lVert \left(\mathbf{h} - \mathbf{b}_{\mathrm{dec}}\right) - \mathbf{W}_{\mathrm{dec}} \mathbf{x} \bigr\rVert_2^2
    \quad \text{subject to} \quad \lVert \mathbf{x} \rVert_0 \le k.
\end{equation}
The problem~\eqref{eq:decoding} is NP-hard, but seminal compressed-sensing theory~\citep{candes2005decoding, donoho2006compressed} showed that $\mathbf{x}$ is unique and can be recovered exactly when every small subset of columns of $\mathbf{W}_{\mathrm{dec}}$ behaves {\em almost} like an isometry. Namely, when there exists a constant $\delta_k \in (0,1)$ such that
\begin{equation}
    \label{eq:rip2}
(1 - \delta_k)\|\mathbf{x}\|_2^2 \le \|\mathbf{W}\mathbf{x}\|_2^2 \le (1 + \delta_k)\|\mathbf{x}\|_2^2
\end{equation}
for all $\mathbf{x} \in \mathbb{R}^n$ such that $\|\mathbf{x}\|_0 \leq k$.
Equation \eqref{eq:rip2} is known as the {\em restricted isometry property} (RIP-2) and is the basis of compressed-sensing theory.

\subsection{Combinatorial compressed sensing}
\label{sec:background_expanders}

Combinatorial compressed-sensing studies the problem of recovering a $k$-sparse vector $\mathbf{x} \in \mathbb{R}^n$ from $\mathbf{h} = \mathbf{M} \mathbf{x}$ where $\mathbf{M} \in \{0,1\}^{m \times n}$ is the adjacency matrix of a $(k, \varepsilon, d)$-expander graph.
For completeness, we define this concept from first principles.

For an integer $n \in \mathbb{N}$ we let $[n] = \{1, \dots, n\}$.
A bipartite graph is a tuple $G = (L, R, E)$ with node set $L \cup R$ and edge set $E$ such that $L \cap R = \emptyset$ and $E \subset L \times R$.
Such a graph is called {\em imbalanced} if $|R| < |L|$ and is called {\em left $d$-regular} if each left vertex has exactly $d$ neighbours.
The adjacency matrix of such a graph can be represented as a binary matrix $\mathbf{M} \in \{0,1\}^{m \times n}$ such that $m < n$ and $\|\mathbf{M}_j\|_0 = d$ for all $j \in [n]$, where we use the notation $\mathbf{M}_j$ to refer to the $j$-th column of $\mathbf{M}$.
For $S\subseteq[n]$, we let $\Gamma(S):=\{i\in[m]: \exists j\in S \text{ such that } \mathbf{M}_{ij}=1\}$. In other words, the set $\Gamma(S) \subset [m]$ is the set of {\em neighbours} of the node set $S \subset [n]$.
The graph $G$ is a \emph{$(k, \varepsilon, d)$-expander} if it is bipartite, left $d$-regular and 
\begin{equation}
    |\Gamma(S)| \ge (1 - \varepsilon) d |S|
\end{equation}
for every $S \subseteq [n]$ with $|S| \le k$.
The parameter $\varepsilon \in (0,1)$ is the {\em expansion parameter} and quantifies how many collisions the nodes of $S$ generate in $[m]$.
That is, how often $i \in \Gamma(S)$ has more than one neighbour. 
Although certifying that a graph is a $(k, \varepsilon, d)$-expander is NP-hard, it has been shown that for appropriate scaling of $m$, $n$, $k$, and $d$ a random binary matrix with $d \ll m$ ones per column is a $(k, \varepsilon, d)$-expander with high probability \citep{bah2013vanishingly}. 
Moreover, it has been shown that when $\varepsilon < 1/2$, the adjacency matrix $\mathbf{M} \in \{0,1\}^{m \times n}$ of expander graphs satisfies a restricted isometry property in the $\ell_1$ norm (RIP-1)~\citep{berinde2008combining}:
\begin{equation}
    \label{eq:rip1}
    (1 - 2\varepsilon)\,d\,\|\mathbf{x}\|_1 \;\le\; \|\mathbf{M}\mathbf{x}\|_1 \;\le\; d\,\|\mathbf{x}\|_1
\end{equation}
for all $\mathbf{x} \in \mathbb{R}^n$ such that $\|\mathbf{x}\|_0 \leq k$.
RIP-1 unlocks a recovery toolkit with different computational and geometric guarantees from the usual dense RIP-2 setting, including greedy and peeling algorithms~\citep{berinde2008ssmp, jafarpour2009efficient, indyk2010sparse} and GPU-parallelisable iterative schemes~\citep{mendozasmith2015expander, mendozasmith2017robust}.
Applied to the mechanistic interpretability setting, expansion guarantees that any small set of features activates a correspondingly large set of distinct activation coordinates, limiting the ambiguity with which a sparse activation pattern can be inverted.

\section{Expander SAE Architecture}
\label{sec:architecture}

An Expander SAE with hyperparameters $(m, n, d, k)$ has learnable parameters $\mathbf{V} \in \mathbb{R}^{m \times n}$, $\mathbf{b}_{\mathrm{enc}} \in \mathbb{R}^n$, and $\mathbf{b}_{\mathrm{dec}} \in \mathbb{R}^m$ which instantiates \eqref{eq:encoder-decoder} by setting
\begin{equation}
    \label{eq:esae}
    \mathbf{W}_{\mathrm{dec}} = \left(\mathbf{V} \odot \mathbf{M}\right) \mathrm{diag}(\boldsymbol{\nu})^{-1},
    \qquad
    \mathbf{W}_{\mathrm{enc}} = \mathbf{W}_{\mathrm{dec}}^{\top}
\end{equation}
where $\mathbf{M} \in \{0,1\}^{m \times n}$ is a binary mask with $\|\mathbf{M}_j\|_0=d$ for all $j \in [n]$ sampled at initialisation and $\boldsymbol{\nu} \in \mathbb{R}^n$ normalises each column of $\mathbf{M} \odot \mathbf{V}$ to unit $\ell_2$ norm.
The forward pass instantiates \eqref{eq:encoder-decoder} with $\sigma(\mathbf{z}) = \mathrm{TopK}_k(\mathbf{z})$, which retains the $k$ entries with largest values, not largest magnitudes, and zeroes the rest.
Retained values can in principle be negative, but empirically, our trained checkpoints almost never select negative pre-activations.
Only the $dn$ nonzero positions of $\mathbf{V}$ receive gradients, giving $dn + n + m$ learnable parameters in total.

\subsection{Decoding with OMP}

A number of algorithms for combinatorial compressed-sensing exist but they are not very robust to noise.
We want to pick an algorithm that is able to exploit the geometry of expanders, but also achieve good reconstruction quality.
In Appendix \ref{app:omp} we compare several sparse-decoding algorithms and find that Orthogonal Matching Pursuit (OMP) \citep{tropp2007omp} gives the strongest reconstruction improvement among the tested decoders for Expander-SAE checkpoints. Algorithm \ref{alg:omp} gives the OMP variant used in our diagnostics.
\begin{algorithm}[h]
    \caption{OMP for Expander SAE}
    \label{alg:omp}
    \begin{algorithmic}[1]
        \REQUIRE measurement $\mathbf{y}$, decoder $\mathbf{W}$, bias $\mathbf{b}$, sparsity $k$
        \STATE $\mathbf{r} \leftarrow \mathbf{y} - \mathbf{b}$, $\quad S \leftarrow \emptyset$
        \FOR{$t = 1, \ldots, k$}
            \STATE $j^{*} \leftarrow \arg\max_{j \notin S} |\langle \mathbf{w}_j, \mathbf{r} \rangle|$
            \STATE $S \leftarrow S \cup \{j^{*}\}$
            \STATE $\hat{\mathbf{x}}_S \leftarrow \mathbf{W}[:, S]^{\dagger} (\mathbf{y} - \mathbf{b})$
            \STATE $\mathbf{r} \leftarrow \mathbf{y} - \mathbf{b} - \mathbf{W}[:, S]\,\hat{\mathbf{x}}_S$
        \ENDFOR
    \end{algorithmic}
\end{algorithm}

Although OMP is well studied, standard guarantees do not directly apply to learned, non-binary expander-style decoders.
The effect of non-binary weights is quantified from the column-flatness factor
\begin{equation}
\label{eq:decoder_flatness}
    \beta(\mathbf{W}_{\mathrm{dec}})
    :=
    \sqrt d
    \max_{i,j:\,\mathbf{M}_{ij}=1}
    |\mathbf{W}_{ij}|.
\end{equation}
Since each decoder column is supported on at most \(d\) entries and has unit \(\ell_2\)-norm, we have $1\le \beta(\mathbf{W}_{\mathrm{dec}})\le \sqrt d$.
Larger \(\beta\) means that a feature has concentrated more mass on fewer coordinates.
The resulting identifiability guarantee is stated in Theorem \ref{thm:expander_sae_recovery}

\begin{theorem}[Identifiability for learned SAE decoders]
\label{thm:expander_sae_recovery}
Let \(\mathbf{W}_{\mathrm{dec}}\in\mathbb{R}^{m\times n}\) be an Expander SAE decoder with unit-normalised columns supported on a left \(d\)-regular mask \(\mathbf{M}\).
Assume that \(\mathbf{M}\) is a \((2k,\varepsilon,d)\)-expander and let $\beta := \beta(\mathbf{W}_{\mathrm{dec}})$. 
If
\begin{equation}
\label{eq:beta_expansion_condition}
    2\beta^2\varepsilon < 1,
\end{equation}
then every \(2k\)-sparse vector \(\mathbf{u}\in\mathbb{R}^n\) satisfies
\begin{equation}
\label{eq:weighted_expander_lower_bound}
    \|\mathbf{W}_{\mathrm{dec}}\mathbf{u}\|_1
    \ge
    \sqrt d
    \left(
        \frac1\beta - 2\beta\varepsilon
    \right)
    \|\mathbf{u}\|_1 .
\end{equation}

Consequently, for every \(k\)-sparse latent vector \(\mathbf{x}_\star\in\mathbb{R}^n\), the noiseless Expander SAE reconstruction model $\mathbf{h} = \mathbf{b}_{\mathrm{dec}} + \mathbf{W}_{\mathrm{dec}}\mathbf{x}_\star$ has \(\mathbf{x}_\star\) as its unique \(k\)-sparse explanation.
Equivalently, the sparse decoder \eqref{eq:decoding} recovers \(\mathbf{x}_\star\) exactly in the noiseless case.
\end{theorem}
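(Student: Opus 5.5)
The plan is to adapt the classical RIP-1 argument of \citet{berinde2008combining} to weighted columns, tracking how the entry magnitudes enter through $\beta$. Fix a $2k$-sparse $\mathbf{u}$ with support $S$, $|S|\le 2k$. Order $S$ as $j_1,\dots,j_s$ with $|u_{j_1}|\ge|u_{j_2}|\ge\cdots$. For each row $i\in\Gamma(S)$, call the edge $(i,j_t)$ a \emph{first edge} if $j_t$ is the earliest element of $S$ (in this order) adjacent to $i$; otherwise call it a \emph{collision edge}. The triangle inequality on row $i$ then gives
\begin{equation*}
|(\mathbf{W}\mathbf{u})_i| \ge |W_{i j_{t(i)}}|\,|u_{j_{t(i)}}| - \sum_{\text{collision } (i,j)} |W_{ij}|\,|u_j|,
\end{equation*}
so that $\|\mathbf{W}\mathbf{u}\|_1 \ge \sum_{\text{first}} |W_{ij}||u_j| - \sum_{\text{collision}} |W_{ij}||u_j|$. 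Rewriting this as (all edges) $-\,2\,$(collision edges) yields $\|\mathbf{W}\mathbf{u}\|_1\ge \sum_{j\in S}\|\mathbf{W}_j\|_1|u_j| - 2\sum_{\text{coll}}|W_{ij}||u_j|$.

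Next I bound the two terms separately using flatness. Each column has unit $\ell_2$ norm and at most $d$ entries, each of magnitude at most $\beta/\sqrt d$. Hence $1=\|\mathbf{W}_j\|_2^2\le \|\mathbf{W}_j\|_\infty\|\mathbf{W}_j\|_1\le (\beta/\sqrt d)\|\mathbf{W}_j\|_1$, giving $\|\mathbf{W}_j\|_1\ge \sqrt d/\beta$. This produces the $\sqrt d/\beta$ term. For the collision term I bound each weight by $\beta/\sqrt d$, which reduces it to $(\beta/\sqrt d)\sum_t c_t|u_{j_t}|$, where $c_t$ counts the collision edges of $j_t$. Expansion applied to the prefix sets $\{j_1,\dots,j_t\}$ (all of size $\le 2k$) gives $\sum_{r\le t}c_r\le \varepsilon d t$. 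Since $|u_{j_t}|$ is nonincreasing, Abel summation gives $\sum_t c_t|u_{j_t}|\le \varepsilon d\sum_t|u_{j_t}|=\varepsilon d\|\mathbf{u}\|_1$. The collision term is therefore at most $\beta\sqrt d\,\varepsilon\|\mathbf{u}\|_1$. Combining the two bounds gives $\|\mathbf{W}\mathbf{u}\|_1\ge\sqrt d(1/\beta-2\beta\varepsilon)\|\mathbf{u}\|_1$, which is \eqref{eq:weighted_expander_lower_bound}.

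The main obstacle is exactly this Abel-summation step: the ordering must be chosen by decreasing $|u_j|$ so that collisions charged to later, smaller coordinates are controlled by prefix expansion. One also has to check that the asymmetric weights do not break this. Using the uniform bound $\beta/\sqrt d$ only on collision edges, and the $\ell_1$ norm of the whole column elsewhere, avoids that issue.

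For identifiability, the condition $2\beta^2\varepsilon<1$ is equivalent to $1/\beta-2\beta\varepsilon>0$, so $\mathbf{W}_{\mathrm{dec}}$ has trivial kernel on $2k$-sparse vectors. Suppose $\mathbf{x}'$ is another $k$-sparse vector with $\mathbf{b}_{\mathrm{dec}}+\mathbf{W}_{\mathrm{dec}}\mathbf{x}'=\mathbf{h}$. Then $\mathbf{u}=\mathbf{x}'-\mathbf{x}_\star$ is $2k$-sparse with $\mathbf{W}_{\mathrm{dec}}\mathbf{u}=0$, so $\mathbf{u}=0$. Consequently, any minimiser of \eqref{eq:decoding}, which attains objective value $0$, must equal $\mathbf{x}_\star$.
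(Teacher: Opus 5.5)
Your proposal is correct and follows the paper's proof essentially step for step. Both use the decreasing-$|u_j|$ ordering, the primary/collision edge split that yields $\|\mathbf{W}\mathbf{u}\|_1 \ge \sum_j |u_j|\,\|\mathbf{w}_j\|_1 - 2(\text{collision mass})$, the column bound $\|\mathbf{w}_j\|_1 \ge \sqrt d/\beta$, and prefix expansion with summation by parts to bound the collision mass by $\beta\varepsilon\sqrt d\,\|\mathbf{u}\|_1$, followed by the same null-space argument for identifiability.
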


Theorem \ref{thm:expander_sae_recovery} is an idealised noiseless identifiability result for the sparse inverse problem defined by the learned decoder and is proved in Appendix \ref{app:expander_sae_theory}.
Specifically, it says that under expansion and flatness, the decoder has no nonzero \(2k\)-sparse null vector, so two distinct \(k\)-sparse codes cannot produce the same centred activation.
Note that when \(\beta=1\), \eqref{eq:beta_expansion_condition} reduces to the classical lossless-expander condition \(\varepsilon<\frac12\) \citep{jafarpour2009efficient} and that for learned non-binary decoders, the expansion deficit must be smaller.
Condition~\eqref{eq:beta_expansion_condition} is therefore the weighted Expander SAE analogue of the usual lossless-expander condition.
A stronger cumulative-coherence condition gives a conservative sufficient condition for standard OMP recovery.

\begin{corollary}[OMP recovery on Expander SAEs]
\label{cor:omp_main}
Let \(\mathbf{W}=\mathbf{W}_{\mathrm{dec}}\) have unit-normalised columns supported on a left
\(d\)-regular mask \(\mathbf{M}\), and let \(\beta=\beta(\mathbf{W})\). If \(\mathbf{M}\) is a
\((k{+}1,\varepsilon,d)\)-expander and
\[
    \beta^2\,\varepsilon\,(2k{+}1) < 1 ,
\]
then in the noiseless model \(\mathbf{h} = \mathbf{b}_{\mathrm{dec}} + \mathbf{W}\mathbf{x}_\star\),
OMP recovers the support of every \(k\)-sparse \(\mathbf{x}_\star\) in \(k\)
steps.
\end{corollary}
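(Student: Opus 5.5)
We reduce the statement to Tropp's cumulative-coherence (Babel function) criterion for OMP. Define
\[
\mu_1(s) := \max_{|\Lambda|=s}\ \max_{j\notin\Lambda}\ \sum_{i\in\Lambda} |\langle \mathbf{w}_i,\mathbf{w}_j\rangle| .
\]
\citet{tropp2007omp} shows that if $\mu_1(k-1)+\mu_1(k)<1$, then in the noiseless case OMP selects a correct atom at every step. It therefore recovers the support of every $k$-sparse $\mathbf{x}_\star$ in $k$ iterations. The bias plays no role, since Algorithm~\ref{alg:omp} subtracts $\mathbf{b}_{\mathrm{dec}}$ and works with $\mathbf{W}\mathbf{x}_\star$ directly. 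Because $\mu_1$ is nondecreasing, it suffices to show $2\mu_1(k)<1$. The plan is to prove
\[
\mu_1(k)\le \beta^2\varepsilon k .
\]
This gives $2\mu_1(k)\le 2k\beta^2\varepsilon<(2k+1)\beta^2\varepsilon<1$. If the sharper form $\mu_1(k-1)+\mu_1(k)\le(2k-1)\beta^2\varepsilon$ falls out, the stated $2k+1$ leaves extra slack.

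\emph{Step 1 (entrywise bound).} Every nonzero entry satisfies $|\mathbf{W}_{ij}|\le\beta/\sqrt d$. So for any two columns $i\neq j$,
\[
|\langle\mathbf{w}_i,\mathbf{w}_j\rangle|\le \frac{\beta^2}{d}\,|\supp\mathbf{M}_i\cap\supp\mathbf{M}_j| .
\]
Summing over $i\in\Lambda$ gives
\[
\sum_{i\in\Lambda}|\langle\mathbf{w}_i,\mathbf{w}_j\rangle|\le\frac{\beta^2}{d}\,C_j ,
\]
where $C_j=\sum_{i\in\Lambda}|\Gamma(i)\cap\Gamma(j)|$ counts the edges from $\Lambda$ that land in $\Gamma(j)$, with multiplicity.

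\emph{Step 2 (collisions via expansion).} We bound $C_j$ using the $(k+1,\varepsilon,d)$-expansion of $S=\Lambda\cup\{j\}$, which has $|S|=k+1$. The number of colliding edges in $S$ is $d|S|-|\Gamma(S)|\le\varepsilon d(k+1)$. Each edge from $\Lambda$ into $\Gamma(j)$ hits a right vertex already used by $j$, so it is a collision. This yields
\[
C_j\le \varepsilon d(k+1), \qquad\text{hence}\qquad \mu_1(k)\le\beta^2\varepsilon(k+1).
\]

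\emph{Main obstacle.} The crude count gives $\mu_1(k-1)+\mu_1(k)\le\beta^2\varepsilon(2k+1)$, which is exactly the stated condition. The delicate point is the bookkeeping in Step 2. Collisions must be counted as the total edges minus the distinct neighbours, $\sum_{v}(\deg_S(v)-1)_+$, and then one checks that each edge from $\Lambda$ landing in $\Gamma(j)$ is charged at most once. The simplest way to ensure this is to give each right vertex $v\in\Gamma(j)$ its first edge from $j$; every $\Lambda$-edge into $v$ is then an excess edge. Applying the same argument with $|\Lambda|=k-1$ (so $|S|=k$, which lies within the expansion range) gives $\mu_1(k-1)\le\beta^2\varepsilon k$. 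Adding the two bounds, $\mu_1(k-1)+\mu_1(k)\le\beta^2\varepsilon(2k+1)<1$, and Tropp's theorem completes the proof.
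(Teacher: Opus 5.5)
Your final argument is correct and is essentially the paper's proof. It uses the entrywise bound $|\langle \mathbf{w}_i,\mathbf{w}_j\rangle|\le(\beta^2/d)\,|\Gamma(i)\cap\Gamma(j)|$ and the observation that every edge from $\Lambda$ into $\Gamma(j)$ is a collision edge of $\Lambda\cup\{j\}$; the paper phrases this via its prefix-collision lemma with $j$ ordered first. This gives $\mu_1(s)\le\beta^2\varepsilon(s+1)$, hence $\mu_1(k)+\mu_1(k-1)\le\beta^2\varepsilon(2k+1)<1$, which is Tropp's cumulative-coherence condition.

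You should delete the opening plan $\mu_1(k)\le\beta^2\varepsilon k$ and the reduction to $2\mu_1(k)<1$. Step 2 only yields $\beta^2\varepsilon(k+1)$, since all $\varepsilon d(k+1)$ collision edges of $\Lambda\cup\{j\}$ may land in $\Gamma(j)$, and the proof as completed in your last paragraph does not use that plan.
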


A proof of Corollary~\ref{cor:omp_main} is also given in Appendix~\ref{app:expander_sae_theory}.

Note that the conditions above are worst-case uniform recovery guarantees, so they should not be read as a claim that every configuration used in Section~\ref{sec:experiments} is certified.
In fact, certifying the parameters of an expander graph or the RIP conditions of a matrix is NP-hard, so we provide Theorem \ref{thm:expander_sae_recovery} and Corollary~\ref{cor:omp_main} to \emph{motivate} the fixed-support architecture and connect it to established theory rather than to certify the regime in which our experiments operate.

Algorithm~\ref{alg:omp} uses the absolute-value selection rule $\arg\max_{j \notin S} |\langle \mathbf{w}_j, \mathbf{r} \rangle|$, the standard formulation under which the cumulative-coherence guarantees of Corollary~\ref{cor:omp_main} and \citet{tropp2007omp} apply.
We point out that our reference implementation uses the signed variant $\arg\max_{j \notin S} \langle \mathbf{w}_j, \mathbf{r} \rangle$, mirroring the convention that $\mathrm{TopK}_k(\mathbf{z})$ retains the $k$ largest \emph{signed} pre-activations~\citep{gao2024scaling}, but the two coincide on the trained Expander-SAE dictionaries we evaluate (Appendix~\ref{app:novelty}).
Therefore, signed and absolute-value argmax select the same support on these dictionaries, and the theoretical guarantees apply to the procedure actually run.

As is standard in compressed sensing, these theorems motivate an asymptotic interpretation as $k$,$m$,$n \rightarrow \infty$ at fixed $\rho=k/m$ and $\delta=m/n$~\citep{donoho2009observed}.
However, it should be noted that our experiments choose $k$ for reconstruction and interpretability, not to activate the theorem's hypotheses, so these are worst-case sufficient conditions that should be read as architectural motivation rather than certification of every experimental setting.
Appendix~\ref{app:bench_geometry} reports empirical certificate ratios and discusses the asymptotic regimes in which the conditions are non-vacuous.

That being said, we also stress that the condition of the theorem is nevertheless non-vacuous.
Let
\[
    m=\delta n,
    \qquad
    k=\rho m,
\]
with fixed \(\delta<1\) and fixed \(\rho>0\).
Counting requires \(2(1-\varepsilon)d\rho\le 1\), because the neighbourhood of \(2k\) columns must fit inside \(m\) rows.
For any fixed flatness target \(\beta_0\), if \(d>2\beta_0^2\), then one can choose a constant \(1/d<\varepsilon<1/(2\beta_0^2)\).
For sufficiently small fixed \(\rho>0\), random left \(d\)-regular masks are \((2k,\varepsilon,d)\)-expanders with high probability, so the theorem applies at positive sparsity density provided training yields \(\beta(\mathbf{W})\le\beta_0\).

\begin{figure*}[h]
    \centering
    \includegraphics[width=0.49\linewidth]{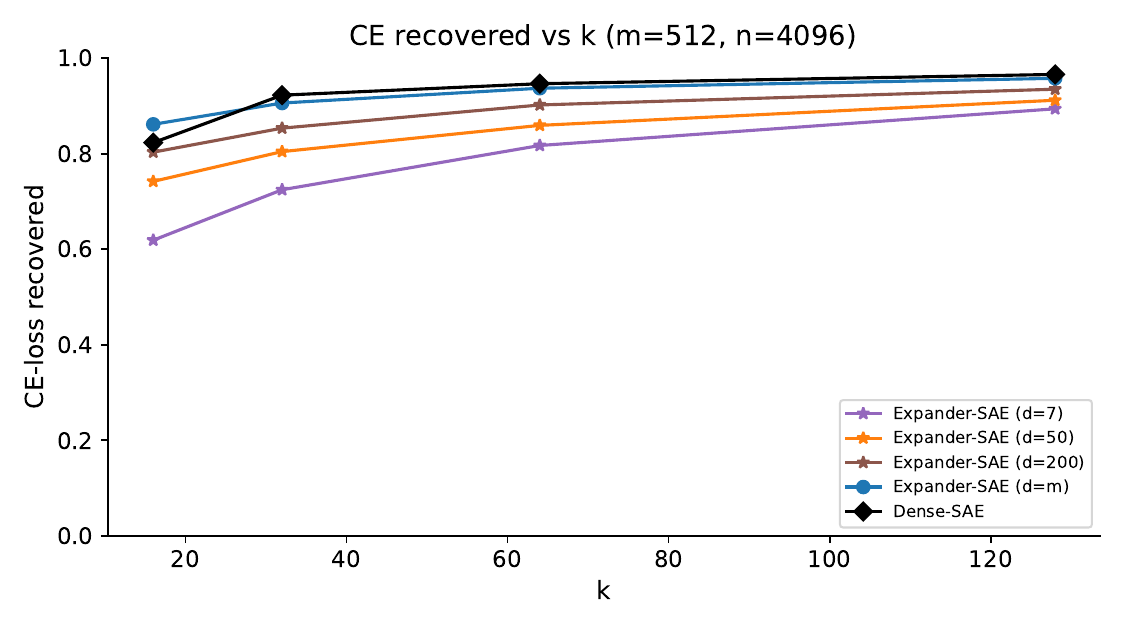}\hfill
    \includegraphics[width=0.49\linewidth]{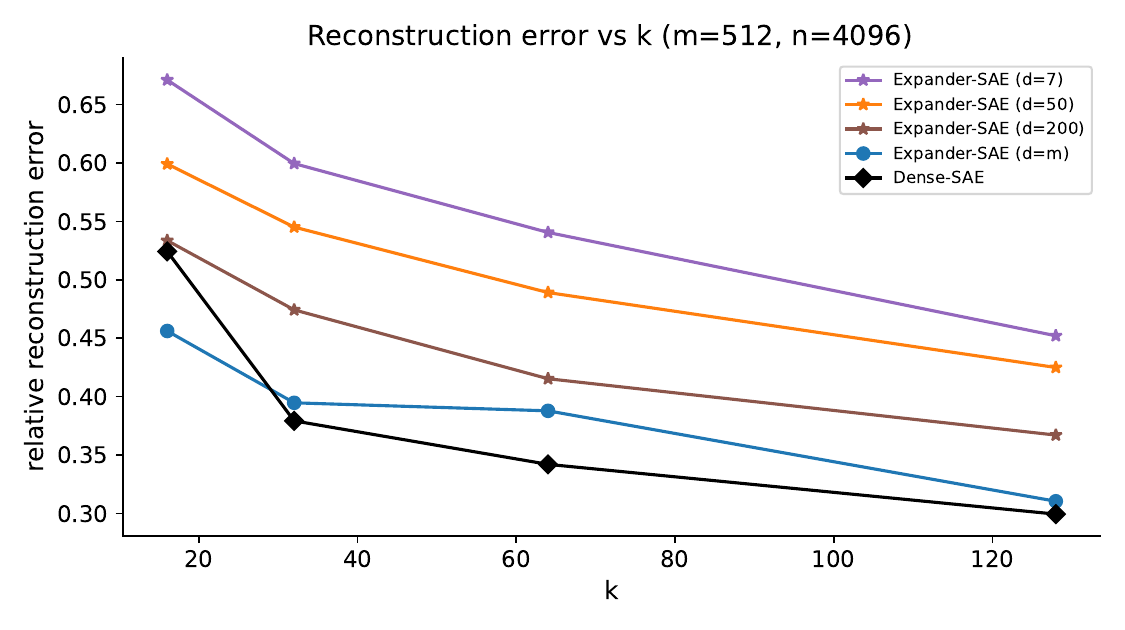}
    \caption{Storage--fidelity frontier on Pythia-70M layer 3 ($m{=}512$, $n{=}4096$, trained encoder). The left panel shows corrected CE-loss recovered versus TopK sparsity $k$, and the right panel shows relative reconstruction error versus $k$. Larger $d$ approaches the dense fidelity floor, while smaller $d$ retains high CE recovery with substantially fewer learned decoder values.}
    \label{fig:storage_frontier}
\end{figure*}

\section{Experiments}
\label{sec:experiments}

Before turning to the results, we summarise the experimental setup and
reporting conventions used throughout.\footnote{All training and
evaluation runs used A10G GPUs on Modal
(\url{https://www.modal.com}). The codebase was implemented with the
assistance of Claude Code (Anthropic Claude Opus 4.7) and is available
at \url{https://github.com/rodrgo/expander-sae}.}
We extract residual-stream activations from a fixed pre-trained language
model at a single hook site, train sparse autoencoders on those
activations, and report three metrics throughout the paper:
\begin{itemize}
    \item trained-encoder relative reconstruction error
    \[
    \mathrm{err}(\mathbf{h})
    =
    \frac{\lVert \mathbf{h} - \hat{\mathbf{h}} \rVert_2}
         {\lVert \mathbf{h} \rVert_2},
    \]
    which measures $\ell_2$ reconstruction quality on a held-out split of
    $5{,}000$ tokens;

    \item zero-ablation-normalised CE-loss recovered,
    \[
    \mathrm{CE\ rec}
    =
    \frac{\mathrm{CE}_{\mathrm{zero}} - \mathrm{CE}_{\mathrm{recon}}}
         {\mathrm{CE}_{\mathrm{zero}} - \mathrm{CE}_{\mathrm{clean}}},
    \]
    which measures downstream language-modelling behaviour;

    \item dead-feature fraction, the proportion of features that never
    fire on the held-out set.
\end{itemize}
We compare two main SAE families: \emph{Dense-SAE}, the standard sparse
autoencoder with independent encoder and decoder weights, and
\emph{Expander-SAE-$d$}, the tied-weight TopK sparse autoencoder of
Section~\ref{sec:architecture}, whose decoder columns are supported on a
frozen left-$d$-regular expander mask.
We also report the $d{=}m$ special case as a \emph{tied-dense} baseline,
which retains the tied architecture while removing the mask.
All architectures use the same TopK nonlinearity, unit-$\ell_2$ column
normalisation through the forward graph, and the optimiser,
learning-rate schedule, and dead-feature resampling rule described in
Appendix~\ref{app:training_details}.

Our headline setting uses residual-stream activations from layer 3 of
Pythia-70M~\citep{biderman2023pythia}, with $m{=}512$, $n{=}4096$, and
$k{=}64$, and a $200{,}000$-token training split together with a
$5{,}000$-token held-out split drawn from the
Pile~\citep{gao2020pile}.
We replicate the main trends on Pythia-160M~\citep{biderman2023pythia},
Qwen2.5-3B~\citep{qwen2025qwen25}, and
Llama-3.2-1B~\citep{grattafiori2024llama} at residual-stream hooks.
Unless explicitly marked as iterative OMP, all main-paper metrics use
the trained encoder.
We treat OMP and other compressed-sensing decoders as offline
diagnostics; the main matched-parameter OMP comparison appears in
Section~\ref{sec:exp_practical}, with full details in
Appendix~\ref{app:omp}.

\subsection{Varying $d$ traces a smooth storage--fidelity frontier across models}
\label{sec:exp_frontier}

We first ask whether varying $d$ traces a smooth storage--fidelity curve or merely picks out a single trade-off point.
Figure~\ref{fig:storage_frontier} plots both metrics against TopK sparsity $k$ for Expander-SAE at $d \in \{7, 50, 200\}$, the tied-dense baseline at $d{=}m$, and Dense-SAE on Pythia-70M layer~3.
At every $k$, increasing $d$ trades learned decoder values for fidelity along a smooth curve.
At $d{=}7$ the Expander decoder uses $73\times$ fewer learned values than the dense decoder while retaining $86\%$ of dense CE recovery ($0.817 / 0.947$), and at $d{=}200$ the gap to dense closes to within five percentage points of CE while still using $2.6\times$ fewer values.
CE-recovered is not strictly monotone in $d$ at every $k$, which is why we report it alongside relative reconstruction error rather than as the sole fidelity axis.

\begin{table*}[h]
    \centering
    \caption{Cross-model replication of the storage--fidelity frontier at $k{=}64$, three seeds. Storage is the ratio of learned decoder values to Dense-SAE at the same $(m,n)$; rel err and CE rec are held-out trained-encoder metrics. Starred/daggered cells denote seed-instability caveats detailed in Appendix~\ref{app:cross_model_full}.}
    \label{tab:cross_model}
    \small
    \setlength{\tabcolsep}{4pt}
    \begin{tabular}{llrrlrrr}
        \toprule
        Model & Layer & $m$ & $n$ & Method & Storage & rel err & CE rec \\
        \midrule
        \multirow{4}{*}{Pythia-70M}   & \multirow{4}{*}{$3$}  & \multirow{4}{*}{$512$}  & \multirow{4}{*}{$4{,}096$}    & Expander $d{=}7$    & $73\times$  & $0.541$        & $0.817$ \\
                                      &                       &                         &                                & Expander $d{=}50$   & $10\times$  & $0.489$        & $0.859$ \\
                                      &                       &                         &                                & Expander $d{=}200$  & $2.6\times$ & $0.415$        & $0.902$ \\
                                      &                       &                         &                                & Dense-SAE           & $1\times$   & $0.342$        & $0.947$ \\
        \midrule
        \multirow{4}{*}{Pythia-160M}  & \multirow{4}{*}{$8$}  & \multirow{4}{*}{$768$}  & \multirow{4}{*}{$6{,}144$}    & Expander $d{=}10$   & $77\times$  & $0.520$        & $0.767$ \\
                                      &                       &                         &                                & Expander $d{=}75$   & $10\times$  & $0.461$        & $0.836$ \\
                                      &                       &                         &                                & Expander $d{=}300$  & $2.6\times$ & $0.399$        & $0.889$ \\
                                      &                       &                         &                                & Dense-SAE           & $1\times$   & $0.332$        & $0.946$ \\
        \midrule
        \multirow{4}{*}{Qwen2.5-3B}   & \multirow{4}{*}{$12$} & \multirow{4}{*}{$2{,}048$} & \multirow{4}{*}{$16{,}384$} & Expander $d{=}7$    & $293\times$ & $0.764$        & $0.828$ \\
                                      &                       &                         &                                & Expander $d{=}30$   & $68\times$  & $0.703$        & $0.894$ \\
                                      &                       &                         &                                & Expander $d{=}102$  & $20\times$  & $0.681$        & $0.919$ \\
                                      &                       &                         &                                & Dense-SAE           & $1\times$   & $0.489$        & $0.983$ \\
        \midrule
        \multirow{4}{*}{Qwen2.5-3B}   & \multirow{4}{*}{$24$} & \multirow{4}{*}{$2{,}048$} & \multirow{4}{*}{$16{,}384$} & Expander $d{=}7$    & $293\times$ & $0.693$        & $0.875$ \\
                                      &                       &                         &                                & Expander $d{=}30$   & $68\times$  & $0.848^\star$  & $0.821^\star$ \\
                                      &                       &                         &                                & Expander $d{=}102$  & $20\times$  & $0.659$        & $0.884$ \\
                                      &                       &                         &                                & Dense-SAE           & $1\times$   & $0.591$        & $0.936$ \\
        \midrule
        \multirow{4}{*}{Llama-3.2-1B} & \multirow{4}{*}{$6$}  & \multirow{4}{*}{$2{,}048$} & \multirow{4}{*}{$16{,}384$} & Expander $d{=}7$    & $293\times$ & $0.809$        & $0.576^\dagger$ \\
                                      &                       &                         &                                & Expander $d{=}30$   & $68\times$  & $0.760$        & $0.743$ \\
                                      &                       &                         &                                & Expander $d{=}102$  & $20\times$  & $0.732$        & $0.816$ \\
                                      &                       &                         &                                & Dense-SAE           & $1\times$   & $0.568$        & $0.952$ \\
        \midrule
        \multirow{4}{*}{Llama-3.2-1B} & \multirow{4}{*}{$12$} & \multirow{4}{*}{$2{,}048$} & \multirow{4}{*}{$16{,}384$} & Expander $d{=}7$    & $293\times$ & $0.757$        & $0.608$ \\
                                      &                       &                         &                                & Expander $d{=}30$   & $68\times$  & $0.742$        & $0.707$ \\
                                      &                       &                         &                                & Expander $d{=}102$  & $20\times$  & $0.720$        & $0.758$ \\
                                      &                       &                         &                                & Dense-SAE           & $1\times$   & $0.499$        & $0.958$ \\
        \bottomrule
    \end{tabular}
\end{table*}

Table~\ref{tab:cross_model} shows that the frontier replicates across model families and scales.
On three open language-model families spanning $70$M to $3$B parameters and three different decoder designs (GPT-NeoX, Llama-style, and Qwen2-style), the same monotone $d \mapsto$ fidelity ordering holds at $k{=}64$.
The most extreme cell is Qwen2.5-3B at $d{=}7$, where Expander uses $293\times$ fewer learned values than the corresponding Dense-SAE ($114{,}688$ against $33{,}554{,}432$) and still recovers $84\%$ of its CE.
Pythia-70M layer 5 is omitted because Dense-SAE itself fails at that hook with $\mathrm{CE\ rec} < 0$ (see Table~\ref{tab:bench_cross_layer}), which we read as a model-stage limitation rather than an architecture-specific failure.

\subsection{Support geometry, not just sparsity or parameter count, drives the gain}
\label{sec:exp_controls}

We run three controls to disentangle the contribution of the expander mask from confounds with parameter count, column sparsity, and support geometry.

\paragraph{Matching parameters by shrinking $n$ changes the sparse-coding problem, not just the budget.} Figure~\ref{fig:controls} (bottom-right) trains a Dense-SAE at the same learned-parameter count by lowering $n$ to $n' = dn/m$. On this Pythia-70M layer 3, Expander beats it by $0.10$ to $0.21$ CE points at every shared budget, because reducing $n$ shifts the sparse-recovery ratio $\delta = m/n$, makes the dictionary undercomplete for small $d$, and at the smallest $d$ drops $n'$ below the null-space property $n > 2k$ where the cell is excluded entirely. At modern-LM scale the trained-encoder comparison appears to invert. Matched-Dense exceeds Expander by roughly $0.01$ CE on Qwen2.5-3B and by $0.060$ to $0.115$ CE on Llama-3.2-1B at both layers. This gap is largely an encoder amortisation effect rather than a decoder-quality difference. Iterative OMP applied symmetrically to both architectures collapses the matched-$n'{=}240$ against Expander $d{=}30$ gap from $+0.060$ to $+0.007$ CE on Llama and from $+0.012$ to $-0.005$ CE on Qwen, and narrows the matched-$n'{=}816$ against Expander $d{=}102$ gap on both models, as we unpack in Section~\ref{sec:exp_practical} and Table~\ref{tab:omp_at_scale}. The structural advantages of Expander supports therefore hold at every scale, namely the flat $(\texttt{values}, \texttt{rows})$ storage that admits structured-OMP decoding, the deterministic seed-regenerable masks, and the strong performance at the smallest $d$ where the matched-$n'$ cell falls below the NSP gate.

\paragraph{Sparse columns without support diversity leave many more features dead.} Figure~\ref{fig:controls} (top) keeps $(m, n, k, d)$ identical but partitions the rows into $G = \lfloor m/d \rfloor$ disjoint blocks and assigns every column to one block, deliberately destroying support diversity. It tracks Expander on relative reconstruction error at small and medium $d$, but at $d{=}200$ its dead-feature rate climbs by roughly $100\times$, from $0.7\%$ to $6.2\%$. Column sparsity alone therefore does not explain the parameter-efficiency frontier, and support diversity controls whether features remain in use at large $d$.

\paragraph{A dense model can be pruned into a good sparse decoder, but only with extra training.} Figure~\ref{fig:controls} (bottom-left) extracts a sparse mask from a pre-trained dense SAE by keeping the top-$d$ rows per column. It closes most of the relative-reconstruction-error gap to dense at every $d$, but requires a second training pass, so the support-extraction win is real but bought with $2\times$ the training compute.

\begin{figure*}[!tb]
    \centering
    \includegraphics[width=\linewidth]{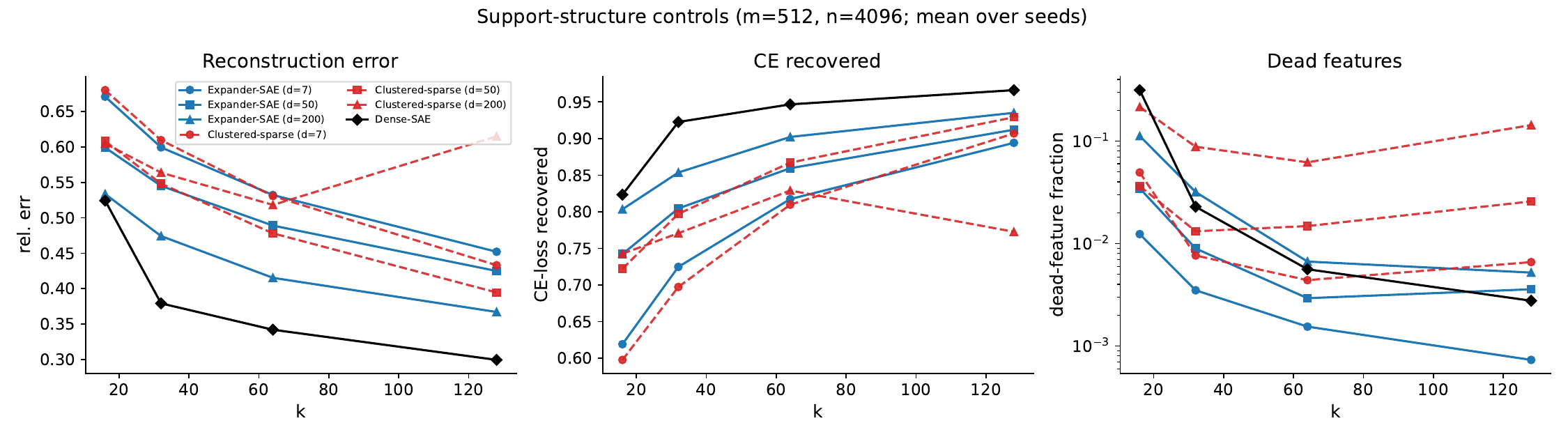}\\[0.5em]
    \includegraphics[width=0.6\linewidth]{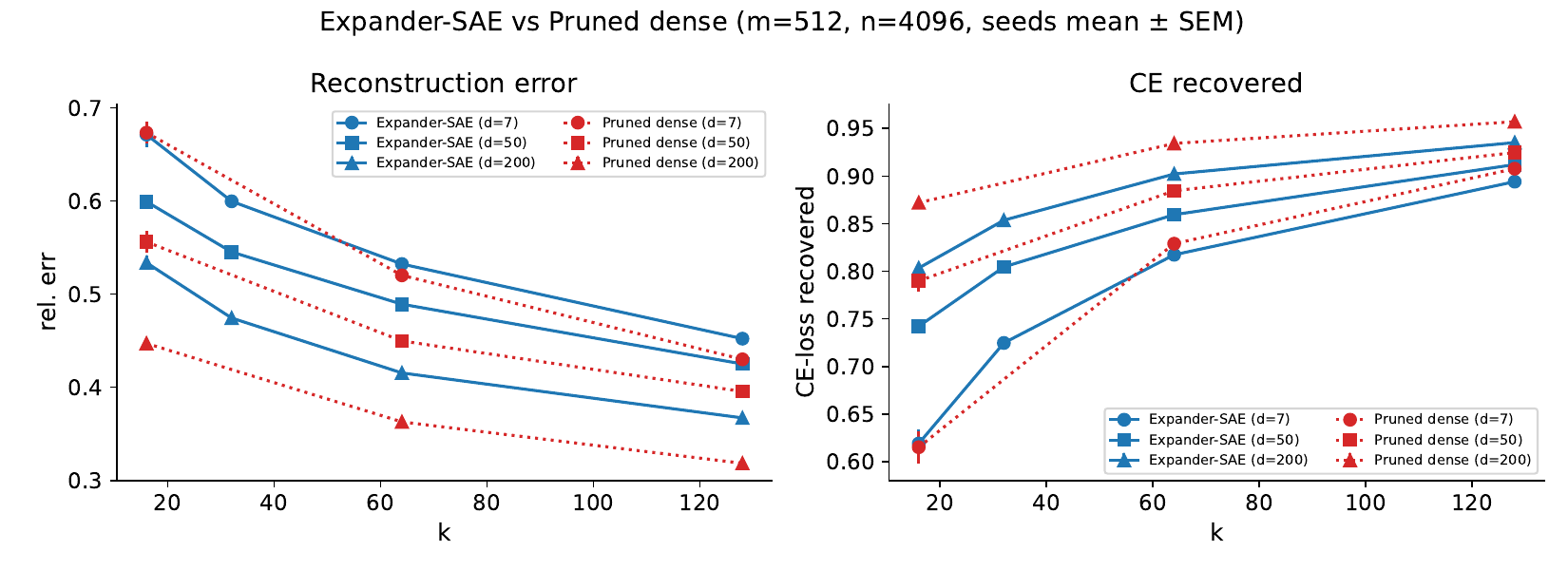}\hfill
    \includegraphics[width=0.38\linewidth]{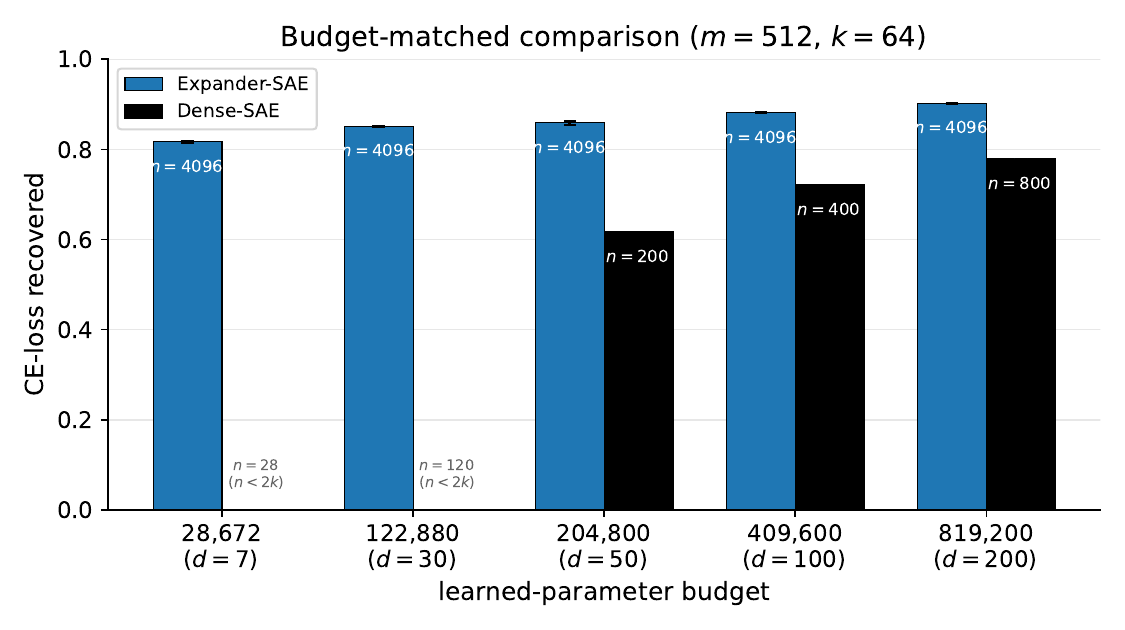}
    \caption{Three controls at $k{=}64$ on Pythia-70M layer 3, three seeds. The top row shows reconstruction error, CE-loss recovered, and dead-feature fraction (log scale) for Expander (solid) and Clustered-sparse (dashed). The bottom-left panel compares Expander (solid) against Pruned-retuned dense (dotted) at $d \in \{7, 50, 200\}$. The bottom-right panel reports the reduced-$n$ dense baseline at matched parameter budget.}
    \label{fig:controls}
\end{figure*}

\subsection{Most of the large-model gap is encoder amortisation, which iterative OMP largely closes}
\label{sec:exp_practical}

Table~\ref{tab:practical} consolidates the deployment-relevant numbers at $k{=}64$ on Pythia-70M layer 3, with on-disk storage, training cost, reconstruction error, CE recovery, and dead-feature fraction in one place. The remainder of this subsection unpacks how iterative OMP changes the picture at modern-LM scale and points to two further diagnostics in the appendix.

\begin{table}[h]
    \centering
    \caption{Deployment summary at $m{=}512$, $n{=}4096$, $k{=}64$ on Pythia-70M layer 3. Storage is total on-disk artefact size in KiB using the deployable layout; sparse models use flat $(\texttt{values},\texttt{rows})$ storage. Metrics are trained-encoder mean $\pm$ SEM over three seeds, except $\ddagger$ rows with two seeds. Each cell shows rel err over CE rec. Pruned dense counts both dense pretraining and sparse retuning. Full storage and metric breakdowns are in Tables~\ref{tab:bench_storage}--\ref{tab:bench_practical}.}
    \label{tab:practical}
    \footnotesize
    \setlength{\tabcolsep}{3pt}
    \renewcommand{\arraystretch}{1.1}
    \begin{tabular}{@{}llrccr@{}}
        \toprule
        Architecture & $d$ & \makecell{Stor.\\(KiB)} & \makecell{Tr.\\passes} & \makecell{rel err /\\CE rec.} & \makecell{Dead\\frac.} \\
        \midrule
        Dense-SAE     & $512$          & $16{,}402$     & $1$          & \makecell{$0.342{\pm}.000$\\$0.947{\pm}.001$} & $0.6\%$ \\
        \midrule
        \multirow{6}{*}{Expander-SAE}
                      & $7$            & $\mathbf{242}$ & $1$          & \makecell{$0.541{\pm}.006$\\$0.817{\pm}.001$} & $0.1\%$ \\
        \addlinespace[2pt]
                      & $50$           & $1{,}618$      & $1$          & \makecell{$0.489{\pm}.003$\\$0.859{\pm}.002$} & $0.3\%$ \\
        \addlinespace[2pt]
                      & $200$          & $6{,}418$      & $1$          & \makecell{$0.415{\pm}.001$\\$0.902{\pm}.001$} & $0.7\%$ \\
        \midrule
        Clust.-sparse & $200^\ddagger$ & $6{,}418$      & $1$          & \makecell{$0.518{\pm}.012$\\$0.829{\pm}.016$} & $\mathbf{6.2\%}$ \\
        \addlinespace[2pt]
        Pruned dense  & $200$          & $6{,}418$      & $\mathbf{2}$ & \makecell{$0.363{\pm}.002$\\$0.934{\pm}.001$} & $0.4\%$ \\
        \bottomrule
    \end{tabular}
\end{table}

\paragraph{Iterative OMP recovers the encoder amortisation gap at modern-LM scale.}
The trained-encoder CE-recovered numbers in Table~\ref{tab:cross_model} for Expander on Qwen2.5-3B and Llama-3.2-1B compress two effects into a single number, namely the quality of the learned $d$-regular decoder and the amortisation gap of the trained encoder against a stronger non-amortised sparse decoder on the same frozen dictionary. Replacing the encoder with iterative OMP at $L{=}1$ (top-$1$ pick by $\mathbf{W}_{\mathrm{dec}}^\top \mathbf{r}$, Cholesky refit on the active set, $k$ sequential iterations) decomposes the two effects, with results in Table~\ref{tab:omp_at_scale}. Two findings stand out. First, at the storage-extreme cell with $d{=}7$ and $293\times$ fewer learned values than full-Dense, iterative OMP gains $+0.073$ CE on Qwen and $+0.118$ CE on Llama, recovering most of the headroom to the larger-decoder cells and showing that the encoder amortisation gap is largest where storage compression is most aggressive. Second, the matched-parameter dense baseline gap from Section~\ref{sec:exp_controls} is mostly an encoder effect at $d{=}30$. Under iterative OMP applied symmetrically to both architectures, matched-Dense $n'{=}240$ against Expander $d{=}30$ shrinks from $+0.060$ to $+0.007$ CE on Llama and from $+0.012$ to $-0.005$ CE on Qwen, with Expander now slightly ahead. At the larger $d{=}102$ cell a smaller residual gap remains. The trained encoder remains the recommended online-deployment path at roughly $1.8$M tokens per second on a single A10G against iterative OMP's $13$k tokens per second (Table~\ref{tab:pareto_d7}), and iterative OMP is the right choice for offline analysis at the storage-extreme cell.

\begin{table}[h]
    \centering
    \caption{Encoder versus iterative OMP CE recovered at $k{=}64$ on Qwen2.5-3B and Llama-3.2-1B layer 12, three seeds. OMP runs on the same frozen checkpoints; Gain is Iter.\ OMP minus Encoder.}
    \label{tab:omp_at_scale}
    \small
    \setlength{\tabcolsep}{5pt}
    \begin{tabular}{lrrr}
        \toprule
        Architecture & \multicolumn{1}{c}{Encoder} & \multicolumn{1}{c}{Iter.\ OMP} & \multicolumn{1}{c}{Gain} \\
             & \multicolumn{1}{c}{CE rec}  & \multicolumn{1}{c}{CE rec}     & \\
        \midrule
        \multicolumn{4}{l}{\textbf{Qwen2.5-3B (layer 12)}} \\
        Expander $d{=}7$          & $0.828$ & $\mathbf{0.901}$ & $\mathbf{+0.073}$ \\
        Expander $d{=}30$         & $0.894$ & $0.914$          & $+0.020$          \\
        Expander $d{=}102$        & $0.919$ & $0.933$          & $+0.014$          \\
        Matched-Dense $n'{=}240$  & $0.906$ & $0.909$          & $+0.003$          \\
        Matched-Dense $n'{=}816$  & $0.945$ & $0.952$          & $+0.007$          \\
        \midrule
        \multicolumn{4}{l}{\textbf{Llama-3.2-1B (layer 12)}} \\
        Expander $d{=}7$          & $0.608$ & $\mathbf{0.726}$ & $\mathbf{+0.118}$ \\
        Expander $d{=}30$         & $0.707$ & $0.765$          & $+0.058$          \\
        Expander $d{=}102$        & $0.758$ & $0.798$          & $+0.040$          \\
        Matched-Dense $n'{=}240$  & $0.767$ & $0.772$          & $+0.005$          \\
        Matched-Dense $n'{=}816$  & $0.873$ & $0.880$          & $+0.007$          \\
        \bottomrule
    \end{tabular}
\end{table}

\paragraph{Features stay novel while coherence degrades gracefully}
Two qualitative diagnostics support the headline numbers. Activation-Jaccard novelty against a Dense-SAE reference is $81\%$ at $d{=}7$, well above the $68\%$ dense-vs-dense seed baseline and the firing-rate-matched null, and falls monotonically with $d$ to $5.8\%$ at $d{=}m$ (Appendix~\ref{app:novelty}). A blinded evaluation by two LLM judges (Claude Sonnet 4.5 and GPT-4o, $25$ features per architecture stratified by firing-rate quartile, anonymised IDs, three calls each at temperature $0$) rates Expander-$d{=}200$ at $3.72 \pm 0.19$ against Dense-SAE at $3.59 \pm 0.19$ on a $1$ to $5$ coherence scale, which is statistically indistinguishable. Expander-$d{=}7$ drops to $2.83 \pm 0.22$, indicating that interpretability degrades gracefully at the storage extreme. Inter-judge Spearman correlation on per-feature mean coherence is $\rho = 0.74$, with details in Appendix~\ref{app:llm_coherence}.

\section{Conclusion}
\label{sec:conclusion}

We introduced Expander SAEs, tied TopK sparse autoencoders whose decoder columns are supported on a fixed left-$d$-regular mask. The architecture keeps the sparse-coding dimensions $(m,n,k)$ fixed while reducing learned decoder values from $mn$ to $dn$. We prove a weighted-expander identifiability result for learned real-valued decoders under expansion and column-flatness assumptions, and empirically show across Pythia, Qwen, and Llama residual streams that varying $d$ traces a storage--fidelity frontier. At small budgets, iterative OMP recovers part of the encoder amortisation gap, while support-structure controls show that diverse sparse supports avoid clustered-mask dead-feature pathologies. We therefore view Expander SAEs not as a final replacement for dense SAEs, but as evidence that decoder support structure is an important and underexplored design axis. Promising next steps are larger-scale evaluations, low-rank corrections, learned sparse supports, and sharper data-dependent theory.

\bibliographystyle{plainnat}

\bibliography{refs}

\onecolumn
\appendix

\section{Training and evaluation details}
\label{app:training_details}

This appendix specifies the activation extraction, optimisation, and dead-feature resampling procedures used throughout the paper.

\paragraph{Activations.} Let $f_{\mathrm{LM}}^{(\ell)}(\,\cdot\,) \in \mathbb{R}^m$ denote the residual-stream output of layer $\ell$ of a fixed pre-trained language model. The headline configuration is Pythia-70M~\citep{biderman2023pythia} at $\ell = 3$ with $m = 512$, hooked on the residual-stream output of \texttt{lm.gpt\_neox.layers[3]} (equivalent to TransformerLens \texttt{blocks.3.hook\_resid\_post}). We tokenise sequences from the Pile~\citep{gao2020pile} with the model's tokeniser, truncate or pad to length $L = 128$, and run all $L$ tokens through the model under \texttt{no\_grad}. The activation dataset is the multiset of per-token residual-stream vectors,
\begin{equation}
    \label{eq:dataset}
    \mathcal{D} = \bigl\{\, \mathbf{h}_{s,t} := f_{\mathrm{LM}}^{(\ell)}\!\bigl(\mathbf{u}_s\bigr)_t \;\big|\; s = 1, \ldots, N_{\mathrm{seq}}, \; t = 1, \ldots, L \,\bigr\},
\end{equation}
with $|\mathcal{D}| = 210{,}000$, extracted in a single Modal A100 forward pass ($\sim\!40$\,s) and cached. Activations are used raw without whitening, and the decoder bias $\mathbf{b}_{\mathrm{dec}}$ is initialised to zero. All subsequent SAE training and evaluation runs use A10G GPUs on Modal.

\paragraph{Loss and sparsity.} The TopK nonlinearity retains the $k$ pre-activations with the largest values and zeroes the rest. The retained values are signed in principle, though the trained encoders we report produce non-negative codes (verified by the sign-aware diagnostic in Appendix~\ref{app:novelty}). A feature is counted as firing when its coefficient is nonzero. The training loss is per-sample $\ell_2$ reconstruction error with no sparsity penalty, since sparsity is enforced by TopK,
\begin{equation}
    \label{eq:loss}
    \mathcal{L} = \frac{1}{B} \sum_{b=1}^{B} \bigl\lVert \mathbf{h}_b - \hat{\mathbf{h}}_b \bigr\rVert_2^2,
\end{equation}
with batch size $B = 256$. The column normalisation in Eq.~\eqref{eq:esae} is part of the forward graph, so the loss is differentiable through it.

\paragraph{Optimisation.} We use Adam~\citep{kingma2015adam} with $\beta_1 = 0.9$, $\beta_2 = 0.999$, $\epsilon = 10^{-8}$, and a single-period cosine learning-rate schedule over $T = 5000$ steps,
\begin{equation}
    \label{eq:lr}
    \eta_t = 10^{-5} + \tfrac12 \bigl(3 \cdot 10^{-4} - 10^{-5}\bigr)\bigl(1 + \cos(\pi t / T)\bigr).
\end{equation}
Gradients are clipped to global $\ell_2$ norm $1.0$. Mini-batches are drawn with \texttt{shuffle=True, drop\_last=True}, and the loader cycles through $\mathcal{D}_{\mathrm{train}}$ until $T$ steps are completed.

\paragraph{Dead-feature resampling.} Following~\citet{bricken2023monosemanticity, gao2024scaling}, every $T_{\mathrm{r}} = \max(1000, T/5)$ steps we identify columns that fired on fewer than $5$ of the last $T_{\mathrm{r}} \cdot B$ samples. For each dead column $j$ we pick the sample in the current mini-batch with the largest residual $\mathbf{r}_b$, project it onto the column's mask support $\mathcal{S}_j = \{i : M_{ij} = 1\}$, and reset
\begin{equation}
    \label{eq:resample}
    \mathbf{V}_{\mathcal{S}_j, j} \leftarrow \frac{1}{\sqrt{d}} \frac{(\mathbf{r}_b)_{\mathcal{S}_j}}{\lVert (\mathbf{r}_b)_{\mathcal{S}_j} \rVert_2}, \qquad
    (\mathbf{W}_{\mathrm{enc}})_{j,:} \leftarrow (\mathbf{W}_{\mathrm{dec}})_{:,j}^{\top}, \qquad
    b_{\mathrm{enc}, j} \leftarrow 0.
\end{equation}
Resampling is disabled when more than $80\%$ of features are dead.

\section{OMP and inference-method diagnostics}
\label{app:omp}

This appendix benchmarks four sparse decoders on the trained Expander SAE checkpoints from Appendix~\ref{app:cross_model_full}, then describes the engineering steps that exploit the $d$-regular structure to make iterative OMP practical at scale. Since the ground-truth sparse code $\mathbf{x}$ is unobserved, we measure performance on the reconstruction $\hat{\mathbf{h}}$ via the relative reconstruction error
\begin{equation}
    \label{eq:relerr}
    \mathrm{err}(\mathbf{h}) \;=\; \frac{\lVert \mathbf{h} - \hat{\mathbf{h}} \rVert_2}{\lVert \mathbf{h} \rVert_2},
\end{equation}
averaged over a held-out batch, alongside CE-loss recovered (Appendix~\ref{app:ce_protocol}) as a downstream check that a small $\ell_2$ gap preserves language-modelling behaviour.

\subsection{Comparing sparse-decoding procedures}

We freeze the trained Expander SAE decoders ($m{=}512$, $n{=}4096$, $d \in \{7, 50, 200\}$) and evaluate four sparse-decoding procedures at the matching $k$. The trained encoder, OMP, NIHT~\citep{blumensath2010normalized}, and CoSaMP~\citep{needell2009cosamp} all solve Eq.~\eqref{eq:decoding} on the same held-out activations, and we report per-sample relative reconstruction error averaged over activations and seeds.

OMP reduces reconstruction error by $15$ to $25\%$ relative to the trained encoder across every $(d, k)$ tested, consistent with the amortisation-gap findings of \citet{oneill2025amortisation} and \citet{gdm2024ito}. NIHT and CoSaMP close part of the gap but do not match OMP, in contrast to their typical behaviour on idealised CS benchmarks~\citep{blanchard2011phase}. SAE activations are not exactly $k$-sparse in the learned dictionary, and OMP's greedy one-column-at-a-time support selection appears more robust to this mismatch than NIHT's hard-thresholded iterative projection or CoSaMP's support-pruning strategy. We therefore use OMP for the remaining diagnostics.

\begin{figure}[h]
    \centering
    \includegraphics[width=\linewidth]{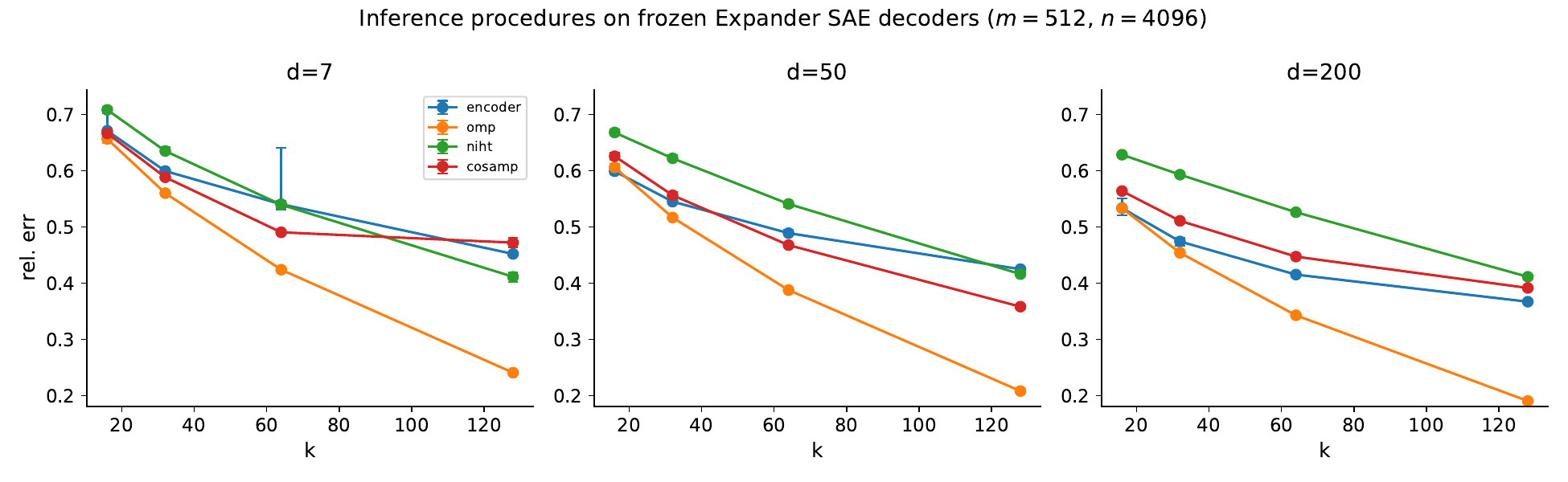}
    \caption{Relative reconstruction error versus $k$ for each decoding method on frozen Expander SAE decoders, shared $y$-range across $d$. Error bars show the seed min/max range.}
    \label{fig:omp_relerr}
\end{figure}

\subsection{Structured OMP for expander dictionaries}

The $d$-regular column structure admits three optimisations that turn iterative OMP from a CPU-bound diagnostic into a GPU-batched inference procedure, and a block-size sweep then exposes a continuum of operating points between iterative and single-shot decoding. Table~\ref{tab:enc_vs_omp} reports throughput and reconstruction error for every variant, and Table~\ref{tab:pareto_d7} sweeps the block size $L$ at $d{=}7$.

The $(\texttt{values}, \texttt{rows})$ storage already used by the encoder forward turns the correlation step $\mathbf{W}^\top \mathbf{r}$ into an $O(dn)$ gather and segment-sum rather than an $O(mn)$ dense GEMV, and turns the residual update into an $O(d|\mathrm{support}|)$ scatter. This yields an $11\times$ throughput improvement at $d{=}7$, $8\times$ at $d{=}50$, and $3\times$ at $d{=}200$ over vanilla OMP, a roughly $m/d$ scaling consistent with Amdahl's law applied to the correlation share of per-iteration cost. The Dense-SAE row still improves by $1.5\times$ because the residual-update scatter is cheaper than a dense $\mathbf{W}\hat{\mathbf{x}}$ matmul. Replacing the per-iteration \texttt{numpy.linalg.lstsq} with an incremental QR refit that grows the active-set factorisation via modified Gram-Schmidt reduces the lstsq cost from $O(mk^2 + k^3)$ to $O(mk + k^2)$ per iteration, and the combined structure-aware and incremental-QR implementation reaches $70\times$ over vanilla OMP at $d{=}7$ ($224$ tok/s on CPU), $19\times$ at $d{=}50$, and $5\times$ at $d{=}200$. Since the OMP iteration loop is sequential within a sample but independent across samples, casting Q, R, and the intermediate gather to bfloat16 (with the $k\times k$ triangular solve kept in fp32 for numerical safety) and batching the largest $B$ that fits in GPU memory reaches $11{,}587$ tokens/s at $d{=}7$, $2{,}245$ at $d{=}50$, $542$ at $d{=}200$, and $207$ for Dense-SAE on a single A10G.

Two further variants exploit the $d$-regular structure to amortise OMP's $k$ sequential iterations. Generalised OMP~\citep{wang2012gomp} picks the top-$L$ correlated columns per outer iteration, and on a $d$-regular dictionary a block of $L$ picks hits nearly disjoint rows whenever $Ld \le m$, keeping the QR refit well-conditioned. Cholesky refit replaces the per-block sequential picks with one batched Cholesky-on-normal-equations solve, forming the Gram matrix $\mathbf{W}_S^\top \mathbf{W}_S$ via two \texttt{bmm}s and producing $\mathbf{x}_S$ in three CUDA kernels per block. A Triton kernel computes $\mathbf{W}^\top \mathbf{r}$ directly from the $(\texttt{values}, \texttt{rows})$ storage, and at $d{=}7$ a further specialisation computes $\mathbf{W}_S^\top \mathbf{W}_S$ via a $(d \cdot d)$-unrolled inner loop over column-pair index intersections, never materialising the dense $\mathbf{W}_S$ tensor and recovering $\mathbf{W}_S^\top \mathbf{y}$ as a free gather at the picked support. Sweeping $L \in \{1, 2, 4, 8, 16, 32, 64\}$ at $d{=}7$, where $\lfloor m/d \rfloor = 73 \ge k$ so $L{=}64$ completes in a single block, traces the Pareto frontier in Table~\ref{tab:pareto_d7}. At $L{=}64$ single-shot decoding reaches $689$k tokens/s with rel err $0.546$, $0.024$ better than the encoder's $0.570$ on the same hardware at only $2.6\times$ throughput cost. At $L{=}4$ rel err drops to $0.465$ at $54$k tokens/s, within $0.007$ of full iterative OMP at $33\times$ encoder cost. The iterative limit $L{=}1$ delivers OMP's full $0.458$ rel err at $13.3$k tokens/s, $135\times$ slower than the encoder.

\begin{table}[!htbp]
    \centering
    \caption{Trained encoder versus OMP variants at $k{=}64$, seed 0, Pythia-70M layer 3. The $*$ marker denotes single-thread Apple silicon CPU and $\ddagger$ denotes a single Modal A10G with bfloat16 and $B{=}1024$, with $B{=}256$ where annotated since Dense-SAE OOMs at $B{=}1024$. The speed-ratio column compares each row to the same-hardware encoder. CPU implementations agree to $10^{-5}$, and the bfloat16 GPU implementation agrees with fp32 within sample-averaging noise.}
    \label{tab:enc_vs_omp}
    \small
    \begin{tabular}{lrrrr}
        \toprule
        Architecture & Inference & rel err & tokens/s & vs same-HW encoder \\
        \midrule
        \multirow{6}{*}{Expander-SAE ($d{=}7$)}   & trained encoder$^*$                       & $0.533$ & $52{,}280$    & --                       \\
                                                  & OMP vanilla$^*$                           & $0.452$ & $3.19$        & $16{,}388\times$ slower  \\
                                                  & OMP structured + QR$^*$                   & $0.452$ & $224.09$      & $233\times$ slower       \\
                                                  & trained encoder$^\ddagger$                & $0.570$ & $1{,}798{,}710$ & --                     \\
                                                  & OMP structured + QR$^\ddagger$            & $0.458$ & $11{,}587$ & $155\times$ slower \\
                                                  & Cholesky + Triton + struct.\ Gram, $L{=}64^{\dagger\ddagger}$ & $0.546$ & $\mathbf{689{,}364}$ & $\mathbf{2.6\times}$ slower \\
        \midrule
        \multirow{8}{*}{Expander-SAE ($d{=}50$)}  & trained encoder$^*$                       & $0.485$ & $42{,}532$    & --                       \\
                                                  & OMP vanilla$^*$                           & $0.390$ & $3.15$        & $13{,}502\times$ slower  \\
                                                  & OMP structured + QR$^*$                   & $0.390$ & $60.23$       & $706\times$ slower       \\
                                                  & trained encoder$^\ddagger$                & $0.509$ & $1{,}916{,}684$ & --                     \\
                                                  & OMP structured + QR$^\ddagger$            & $0.380$ & $2{,}245$ & $854\times$ slower \\
                                                  & gOMP $L{=}4^\ddagger$                     & $0.390$ & $7{,}367$  & $260\times$ slower \\
                                                  & gOMP $L{=}m/d^\ddagger$                   & $0.409$ & $13{,}270$ & $144\times$ slower \\
                                                  & Cholesky refit + Triton corr.$^\ddagger$  & $0.409$ & $\mathbf{48{,}784}$ & $\mathbf{39\times}$ slower \\
        \midrule
        \multirow{6}{*}{Expander-SAE ($d{=}200$)} & trained encoder$^*$                       & $0.415$ & $64{,}540$    & --                       \\
                                                  & OMP vanilla$^*$                           & $0.344$ & $3.12$        & $20{,}686\times$ slower  \\
                                                  & OMP structured + QR$^*$                   & $0.344$ & $16.11$       & $4{,}007\times$ slower   \\
                                                  & trained encoder$^\ddagger$                & $0.440$ & $1{,}930{,}422$ & --                     \\
                                                  & OMP structured + QR$^\ddagger$            & $0.331$ & $542$ & $3{,}556\times$ slower \\
                                                  & gOMP $L{=}m/d^\ddagger$                   & $0.334$ & $\mathbf{1{,}063}$ & $\mathbf{1{,}818\times}$ slower \\
        \midrule
        \multirow{5}{*}{Dense-SAE}                & trained encoder$^*$                       & $0.343$ & $67{,}303$    & --                       \\
                                                  & OMP vanilla$^*$                           & $0.270$ & $3.14$        & $21{,}434\times$ slower  \\
                                                  & OMP structured + QR$^*$                   & $0.270$ & $5.48$        & $12{,}282\times$ slower  \\
                                                  & trained encoder$^\ddagger$                & $0.365$ & $2{,}249{,}358$ & --                     \\
                                                  & OMP structured + QR$^{\ddagger\,(B{=}256)}$ & $0.279$ & $\mathbf{207}$ & $\mathbf{10{,}867\times}$ slower \\
        \bottomrule
    \end{tabular}\\
    \footnotesize
    $\dagger$ The block-size sweep for this row at $d{=}7$ traces a complete rel-err versus throughput Pareto curve, shown in Table~\ref{tab:pareto_d7}.
\end{table}

\begin{table}[!htbp]
    \centering
    \caption{Block-size $L$ sweep for the Cholesky-refit + Triton-correlation + structured-Gram OMP variant at $d{=}7$ ($m{=}512$, $n{=}4096$, $k{=}64$, A10G, bf16, $B{=}1024$, seed 0). $L$ controls the number of columns added to the active set per outer iteration, with $L{=}1$ recovering iterative OMP and $L{=}64$ collapsing to single-shot decoding. The final column compares each row to the same-hardware encoder throughput of $1{,}798{,}710$ tok/s. Supports are picked by \texttt{topk} on $\mathbf{W}^\top \mathbf{r}$, matching the encoder's TopK convention exactly.}
    \label{tab:pareto_d7}
    \small
    \begin{tabular}{rrrrrr}
        \toprule
        $L$ & outer iters & rel err & tokens/s & ms / 1024-batch & vs same-HW encoder \\
        \midrule
        $1$  & $64$ & $0.458$ & $13{,}305$  & $77.0$  & $135\times$ slower  \\
        $2$  & $32$ & $0.461$ & $26{,}605$  & $38.5$  & $68\times$ slower   \\
        $4$  & $16$ & $0.465$ & $53{,}924$  & $19.0$  & $33\times$ slower   \\
        $8$  & $8$  & $0.473$ & $107{,}529$ & $9.52$  & $17\times$ slower   \\
        $16$ & $4$  & $0.489$ & $214{,}774$ & $4.77$  & $8.4\times$ slower  \\
        $32$ & $2$  & $0.513$ & $410{,}614$ & $2.49$  & $4.4\times$ slower  \\
        $64$ & $1$  & $0.546$ & $\mathbf{689{,}364}$ & $\mathbf{1.49}$ & $\mathbf{2.6\times}$ slower \\
        \midrule
        \multicolumn{3}{r}{trained encoder (same HW)} & $1{,}798{,}710$ & $0.57$ & $1\times$ \\
        \bottomrule
    \end{tabular}
\end{table}

\subsection{The encoder as amortised OMP}

When $k \le m/d$, single-shot expander OMP and the trained encoder pick supports through the same structured $d$-regular gather followed by a \texttt{topk}, with the encoder additionally biased by its learned $\mathbf{b}_{\mathrm{enc}}$. Their supports overlap by roughly $82\%$ at $d{=}7$ on a held-out batch. Where the encoder emits the raw pre-activation, OMP solves a Cholesky on the picked support to recover the optimal least-squares coefficients, and the $0.024$ rel err improvement at $d{=}7$ is the amortisation gap of \citet{oneill2025amortisation} attributable to this coefficient refit at $2.6\times$ throughput cost. With sample-batching on GPU, OMP throughput is bounded by the $k$ sequential iterations rather than by dictionary size, so at $d{=}200$ and Dense-SAE the encoder-vs-OMP gap remains $\sim\!1{,}800\times$ and $\sim\!11{,}000\times$ where many sequential blocks dominate. The encoder remains the inference method used throughout the rest of the paper, and OMP is reported as an offline diagnostic that quantifies its amortisation gap.

\subsection{Geometry diagnostics: distance to identifiability and OMP certification}
\label{app:bench_geometry}

This appendix reports empirical estimates of the two quantities that govern the worst-case recovery certificates in Theorem~\ref{thm:expander_sae_recovery} and Corollary~\ref{cor:omp_coherence}, namely the expansion deficit of the fixed mask and the flatness factor $\beta(W)$ of the learned decoder, so that the distance between our finite trained models and the certified regimes is explicit. We define the ideal-decoder identifiability ratio
\[
    R_{\mathrm{id}}(k)
    \;:=\;
    2\beta_{\max}^2\,\widehat\varepsilon(2k),
\]
corresponding to the condition in Theorem~\ref{thm:expander_sae_recovery}, and the OMP sufficient-condition ratio
\[
    R_{\mathrm{OMP}}(k)
    \;:=\;
    \beta_{\max}^2\,\widehat\varepsilon(k+1)\,(2k+1),
\]
corresponding to Corollary~\ref{cor:omp_coherence}. Values below one certify the corresponding worst-case condition under the empirical expansion estimate, and values above one do not imply failure but mean the worst-case certificate is inactive. We compute both ratios at $m{=}512$, $n{=}4096$, seed 0. The empirical expansion deficit $\widehat{\varepsilon}_{\mathrm{greedy}}$ is obtained as a one-sided stress test, taking the maximum observed $1 - |\mathcal{N}(S)|/(d|S|)$ over size-$s$ candidate subsets produced by $10^3$ uniform random subsets, $30$ greedy collision-maximising restarts, and the top-overlap neighbourhood for every column. The flatness factor uses $\beta_{\max} = \max_j \sqrt{d}\,\|\mathbf{w}_j\|_\infty$ after column normalisation.

\begin{figure}[h]
\centering
\includegraphics[width=\linewidth]{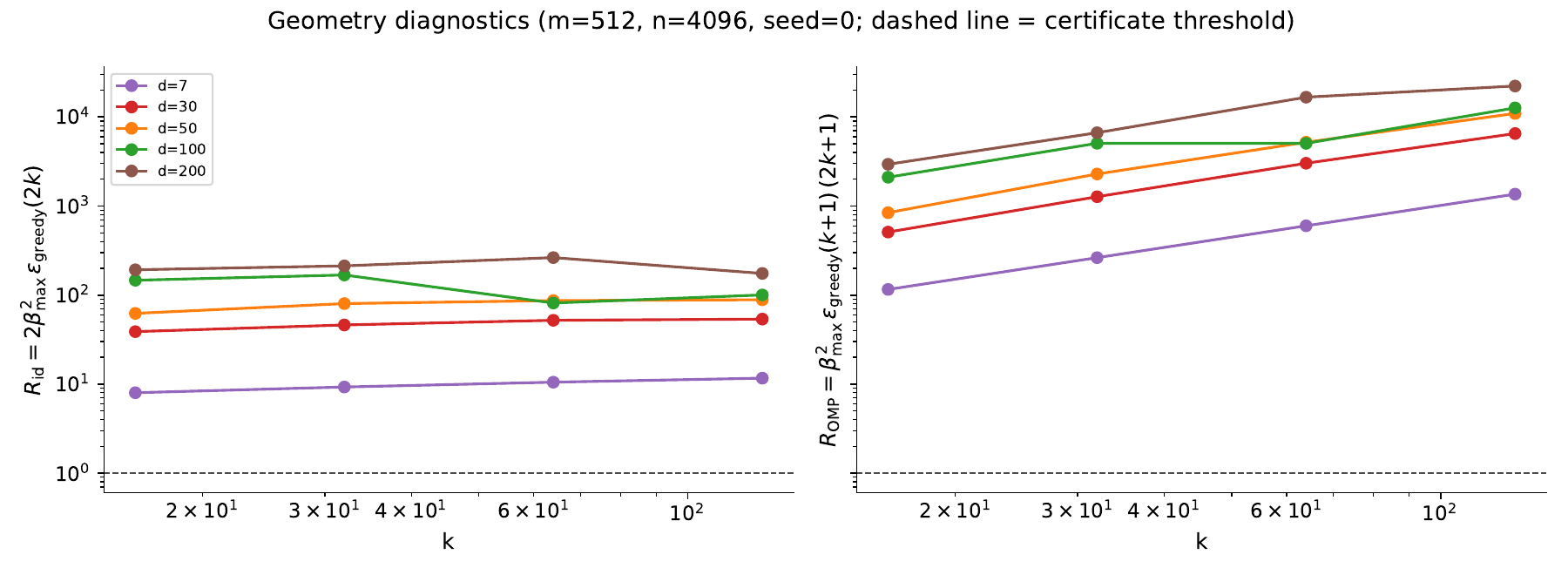}
\caption{Empirical certificate ratios on the trained Expander-SAE grid, one line per $d$. The dashed horizontal line marks the certificate threshold, and values below the line satisfy the worst-case sufficient condition.}
\label{fig:bench_geometry}
\end{figure}

Both ratios sit well above the certificate threshold across the entire trained grid, while OMP and the trained encoder reconstruct successfully on the same models. The current sufficient certificates are therefore loose in the operating regime probed by our language-model experiments, which is consistent with the asymptotic phase-transition framing in Section~\ref{sec:architecture} but does not by itself guarantee recovery on any individual cell.

\section{Active-support collision diagnostic}
\label{app:active_support_collisions}

This appendix tests the expansion property on the supports the trained SAE actually uses, rather than on worst-case random subsets. For each held-out activation we read off the active TopK support $S\subseteq[n]$ ($|S|=k$) and compute the empirical expansion deficit $1 - |\Gamma(S)|/(d|S|)$ and the duplicate-edge count $\sum_i \max(0, \deg_i(S) - 1)$ on the trained mask, giving a data-dependent counterpart of the worst-case stress test in Appendix~\ref{app:bench_geometry}.

\begin{figure}[h]
    \centering
    \includegraphics[width=\linewidth]{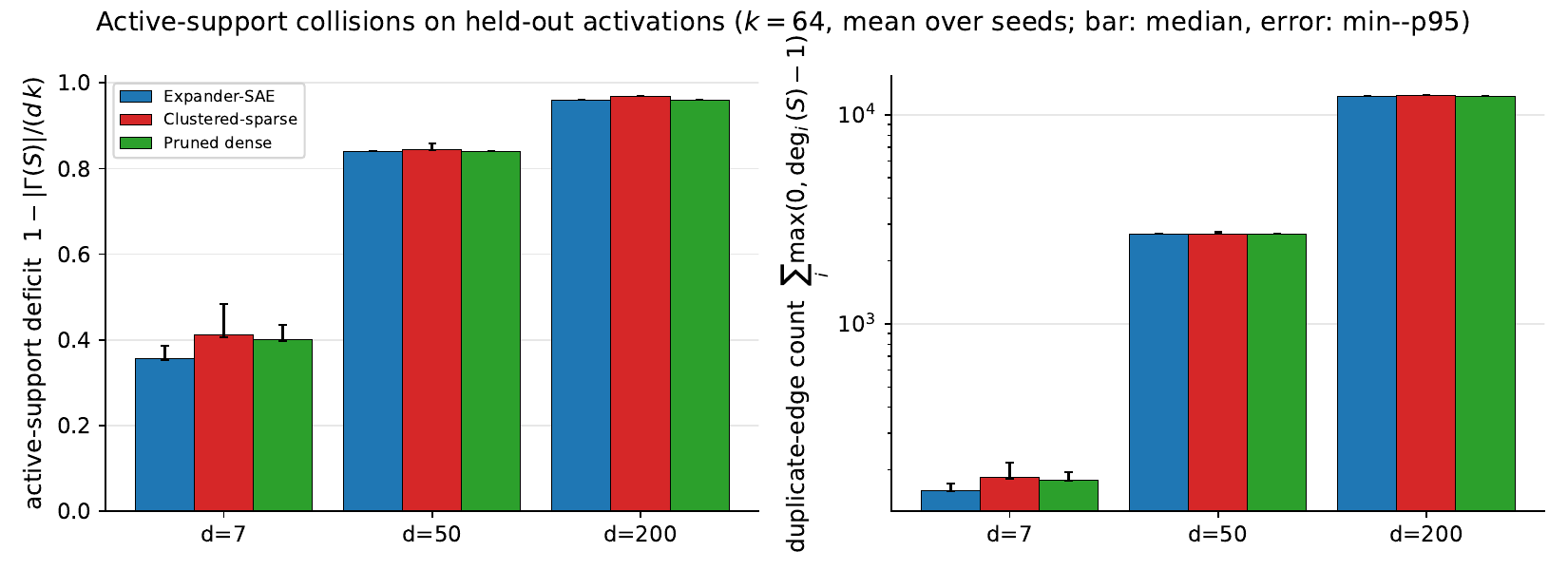}
    \caption{Active-support collision diagnostic at $k{=}64$ on held-out activations, three seeds. The left panel shows per-sample empirical expansion deficit on the active TopK support, with bars at the median and error bars spanning min to p95. The right panel shows per-sample duplicate-edge count on a log scale.}
    \label{fig:active_support_collisions}
\end{figure}

On the trained Expander-SAE's active supports, the collision deficit and duplicate-edge count fall below the matched Clustered-sparse control at every $d$. The gap is largest at $d{=}7$, where Expander has median deficit $0.36$ against Clustered's $0.41$. At $d \in \{50, 200\}$ both architectures saturate the row budget ($d|S| > m$), so the deficit is dominated by an unavoidable combinatorial floor and the two architectures look similar on this metric. The dead-feature pathology at $d{=}200$ in Figure~\ref{fig:support_structure} is therefore driven by the loss of distinct supports rather than by per-sample collisions, since Clustered $d{=}200$ admits only two possible column supports.

\section{Feature novelty diagnostics}
\label{app:novelty}

This appendix measures how different the Expander SAE's feature decomposition is from the Dense-SAE baseline using activation-pattern overlap on held-out Pythia-70M layer-3 activations. For each pair of features we compute the Jaccard overlap of their binary firing patterns under the convention that feature $j$ fires when $x_j \neq 0$, and call an Expander feature \emph{activation-novel} if its best-match Jaccard against the dense reference set is below $0.1$. We compare against an ensemble of dense SAEs rather than a single seed, since novelty relative to one seed can reflect ordinary seed variation. We use activation Jaccard rather than decoder-column cosine similarity because cosine is geometrically biased at low $d$, where a $d$-sparse decoder column has limited cosine to a dense column with mass spread evenly across coordinates regardless of whether the two represent the same feature. Decoder-cosine results appear in Appendix~\ref{app:decoder_cosine_novelty}, and at $d{=}m$ the two metrics agree. We calibrate the raw novelty fraction against two nulls. The first is a firing-rate-matched null in which $20$ random firing sets of the same size as each feature are sampled from the held-out activations, isolating the contribution of feature rarity to low Jaccard scores. The second is a dense-vs-dense baseline of $0.68$, computed between two dense tied SAEs trained with different seeds, which gives the rate at which one dense SAE has features the other does not match under the same threshold.

\begin{figure}[h]
    \centering
    \begin{minipage}[c]{0.49\linewidth}
        \centering
        \includegraphics[width=\linewidth]{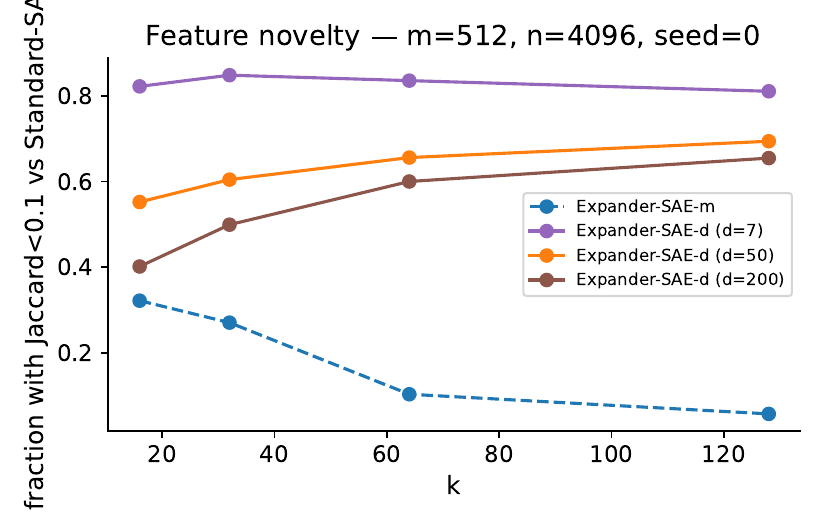}
    \end{minipage}\hfill
    \begin{minipage}[c]{0.49\linewidth}
        \centering
        \includegraphics[width=\linewidth]{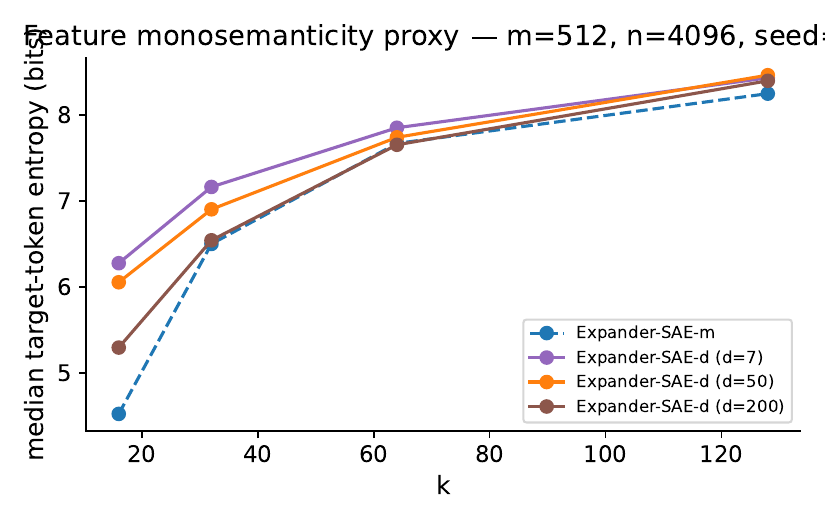}
    \end{minipage}

    \vspace{0.5em}

    \includegraphics[width=\linewidth]{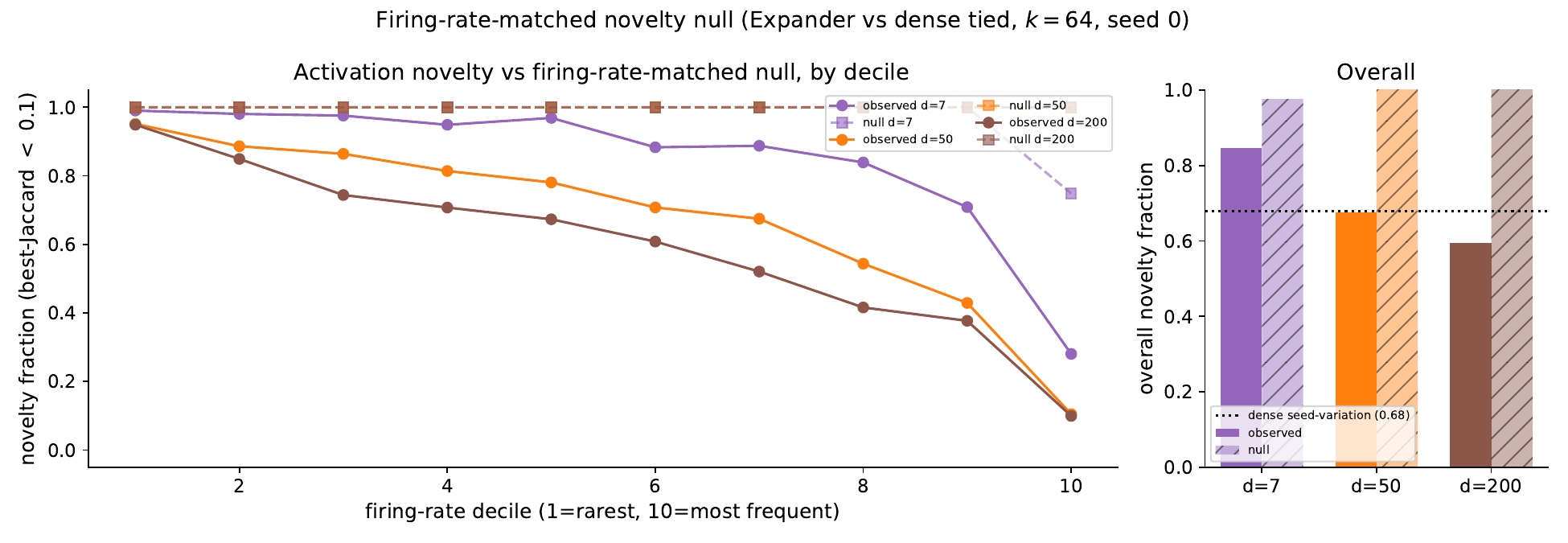}
    \caption{Feature novelty diagnostics for Expander-SAE against the Dense-SAE reference (trained encoder). The top-left panel shows activation-Jaccard novelty, the fraction of Expander features whose best-match Jaccard against the dense reference falls below $0.1$. The top-right panel shows median per-feature target-token entropy as a monosemanticity proxy. The bottom panels show the firing-rate-matched novelty null at $k{=}64$, seed 0, with novelty stratified by firing-rate decile on the left and the overall fractions per $d$ on the right.}
    \label{fig:novelty}
\end{figure}

The matched null sits close to one across all $d$ tested while the observed Expander novelty fraction is strictly lower, with $0.85$ against $0.97$ at $d{=}7$ and $0.60$ against $1.00$ at $d{=}200$. The raw novelty proxy is therefore partly inflated by feature rarity, since random sparse firing sets routinely fall below the Jaccard threshold, but the Expander features still overlap with dense features above the rarity baseline. Reading these against the dense-vs-dense seed-variation baseline of $0.68$ sharpens the picture further. At $d{=}7$ the observed Expander-vs-dense fraction of $0.85$ exceeds dense seed variation by about $0.17$, so the low-$d$ Expander lands on a decomposition more different from dense than two dense SAE seeds are from each other. At $d{=}50$ the observed fraction $0.68$ matches the dense-vs-dense baseline, indicating the difference is largely explained by seed variation. At $d{=}200$ the Expander fraction $0.60$ falls slightly below the baseline, consistent with the architecture converging towards the dense decomposition as $d \to m$. Low-$d$ Expander SAEs therefore land on a substantively different decomposition from dense baselines, while at higher $d$ the difference falls within the seed-variation noise floor.

\paragraph{Sign-aware firing convention.} We recompute the activation-Jaccard novelty under three firing definitions, $x_j \neq 0$, $x_j > 0$, and $x_j < 0$. For all trained Expander-SAE and Dense-SAE checkpoints in our sweep, the $x_j < 0$ rule yields zero alive features, since the TopK-of-largest-pre-activations operator with $n \gg k$ drives the trained encoders to non-negative codes in practice. The $x_j > 0$ rule therefore coincides exactly with $x_j \neq 0$, and the analysis above is unchanged under sign-aware splitting.

\subsection{Feature reliability diagnostics}
\label{app:reliability}

This appendix reports firing-rate and target-token entropy summaries for the activation-novel and shared feature subsets that the novelty analysis in Appendix~\ref{app:novelty} produces. We classify a feature as \emph{activation-novel} when its best-match Jaccard against the dense reference falls below $0.1$, and as \emph{shared} when the best-match Jaccard exceeds $0.5$, computed at $k{=}64$ and seed 0 over $128{,}000$ held-out Pythia-70M layer-3 tokens. Table~\ref{tab:appD_feature_stats} reports the median firing rate and median target-token entropy within each subset across $d$.

\begin{table}[h]
    \centering
    \caption{Per-feature firing rate and target-token entropy, taken as the median over the indicated subset, for Expander-SAE at five $d$ values against the dense tied reference at $k{=}64$, seed 0. Counts are out of $n{=}4096$ total features, and \emph{alive} means firing on at least one held-out token.}
    \label{tab:appD_feature_stats}
    \small
    \begin{tabular}{rrrrrrrr}
        \toprule
        $d$ & alive & novel ($J{<}0.1$) & shared ($J{>}0.5$) & \shortstack{rate med\\novel} & \shortstack{rate med\\shared} & \shortstack{entropy med\\novel (bits)} & \shortstack{entropy med\\shared (bits)} \\
        \midrule
          7 & $4096$ & $3424$ & $22$ & $0.82\%$ & $2.51\%$ & $7.90$ & $7.72$ \\
         30 & $4096$ & $2903$ & $20$ & $0.96\%$ & $3.18\%$ & $7.97$ & $7.90$ \\
         50 & $4096$ & $2688$ & $20$ & $0.86\%$ & $32.51\%$ & $7.87$ & $10.31$ \\
        100 & $4095$ & $2528$ & $34$ & $0.86\%$ & $3.15\%$ & $7.90$ & $7.06$ \\
        200 & $4096$ & $2459$ & $57$ & $0.84\%$ & $2.82\%$ & $7.77$ & $6.78$ \\
        \bottomrule
    \end{tabular}
\end{table}

Two qualitative observations follow. First, novel features fire on rarer subsets than shared features at every $d$, with median firing rate $0.8$ to $1.0\%$ against $2.5\%$ or higher, which is consistent with the firing-rate-matched null in Appendix~\ref{app:novelty} explaining a substantial part of the raw novelty fraction. Second, the median target-token entropy of novel features is roughly stable across $d$ at $7.8$ to $8.0$ bits, indicating that novel features fire on a broad set of token identities rather than concentrating on a small set, since the unconditional unigram entropy of the Pile under Pythia-70M's tokeniser sits in the low double digits. Shared features' entropies are more variable across $d$, reflecting the small per-bucket counts of $20$ to $57$ features at any $d$. Together these diagnostics show that activation-novel features are reproducible under our proxies in the sense that their firing sets are not driven by single rare tokens, while leaving open the stronger question of whether they are more interpretable than shared features. Split-half firing-rate reliability and token-frequency stability across halves are computed in \texttt{experiments/feature\_analysis.py} (\texttt{\_split\_half\_reliability}) and follow the same pattern.

\section{Blinded LLM evaluation of feature dashboards}
\label{app:llm_coherence}

This appendix tests whether the Expander SAE's distinct feature decomposition is interpretable, since novelty against the dense baseline (Appendix~\ref{app:novelty}) does not by itself establish meaningfulness. We sample $25$ features each from Expander-SAE at $d{=}7$, Expander-SAE at $d{=}200$, and the Dense-SAE baseline (Pythia-70M layer 3, $k{=}64$, seed 0), stratified across firing-rate quartiles per architecture so that quartile distributions match. For each feature we render a dashboard of the $15$ top-activating Pile contexts in $16$-token windows with the target token highlighted, under a global anonymised feature ID (\texttt{F01}--\texttt{F75}) that hides $d$, the decoder column, firing rate, and architecture name from the judge. Two LLM judges, Claude Sonnet 4.5 and GPT-4o, score each dashboard on a $1$ to $5$ coherence scale where $5$ marks a clear unifying theme and $1$ marks unrelated activations, and propose a $1$ to $3$-word concept label or ``no clear concept'', at temperature $0$ with three calls per (feature, judge) pair. Inter-judge Spearman correlation on per-feature mean coherence is $\rho = 0.74$.

\begin{figure}[h]
    \centering
    \includegraphics[width=0.95\linewidth]{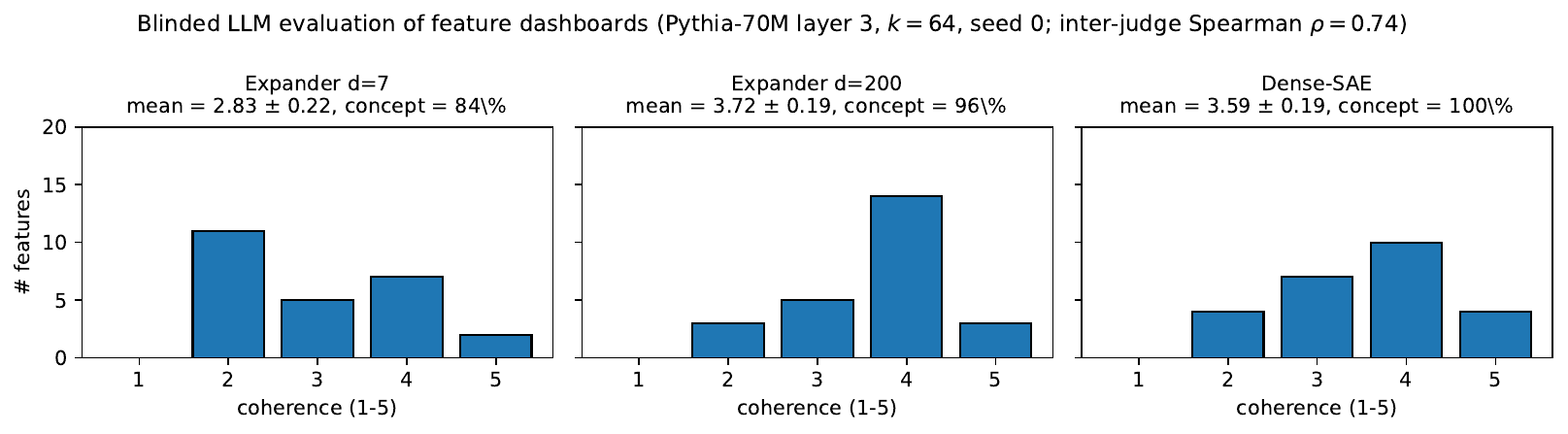}
    \caption{Per-architecture distribution of mean coherence ratings under blinded LLM evaluation, averaged across two judges and three calls each then rounded to integers $1$ to $5$. $n{=}25$ features per architecture, stratified by firing-rate quartile so that quartile distributions match across architectures. Architecture is invisible to the judges, with features identified only by anonymised ID.}
    \label{fig:feature_coherence}
\end{figure}

Mean coherence is $3.59 \pm 0.19$ for Dense-SAE, $3.72 \pm 0.19$ for Expander $d{=}200$, and $2.83 \pm 0.22$ for Expander $d{=}7$, with 95\% bootstrap CIs of $[3.21, 3.96]$, $[3.34, 4.06]$, and $[2.43, 3.27]$ respectively. The fraction of features assigned a non-empty concept label is $100\%$ for Dense-SAE, $96\%$ for Expander $d{=}200$, and $84\%$ for Expander $d{=}7$. At $d{=}200$, interpretability is statistically indistinguishable from the dense baseline while the Expander uses $2.6\times$ fewer learned values. At the storage extreme $d{=}7$ ($73\times$ fewer learned values), coherence drops by about $0.8$ points on the $1$ to $5$ scale, but $84\%$ of features still receive a concrete concept label, matching the activation-Jaccard novelty result that low-$d$ Expander finds a substantively different but still meaningful decomposition. The architectural constraint at extreme storage compression carries a measurable interpretability cost that scales with the storage savings, while at $d{=}200$ the cost falls below the measurement noise floor.

\section{Cross-model and cross-layer replication: full results}
\label{app:cross_model_full}

This appendix reports the full storage-fidelity sweep that Section~\ref{sec:exp_frontier} summarises, including all $(d, k)$ cells on the Pythia-70M headline configuration and the cross-layer, cross-model, and cross-architecture replications referenced in Table~\ref{tab:cross_model}. CE-loss recovered is computed by replacing the layer-$\ell$ residual stream of the language model with the SAE reconstruction and measuring next-token cross-entropy on held-out sequences,
\begin{equation}
    \label{eq:ce-recovered}
    \mathrm{CE\ recovered}
    \;=\;
    \frac{
        \mathrm{CE}_{\mathrm{zero}}-\mathrm{CE}_{\mathrm{recon}}
    }{
        \mathrm{CE}_{\mathrm{zero}}-\mathrm{CE}_{\mathrm{clean}}
    },
\end{equation}
where $\mathrm{CE}_{\mathrm{clean}}$ is the cross-entropy of the unmodified language model, $\mathrm{CE}_{\mathrm{zero}}$ replaces the activation by the all-zero vector at the hook site, and $\mathrm{CE}_{\mathrm{recon}}$ replaces it by $\hat{\mathbf{h}} = \mathbf{W}_{\mathrm{dec}}\mathbf{x} + \mathbf{b}_{\mathrm{dec}}$. A value of $1$ means the reconstruction matches the clean model's CE, $0$ means it is no better than zero-ablation, and negative values mean it is worse than zero-ablation. In Figure~\ref{fig:storage_frontier} we sweep Expander SAE at $m{=}512$, $n{=}4096$ over $d \in \{7, 50, 200\}$ and TopK levels $k \in \{16, 32, 64, 128\}$, comparing against the tied dense SAE at $d{=}512$ and Dense-SAE, with each point averaged over three seeds. Relative to the dense tied baseline, Expander uses $73\times$ fewer learned decoder values at $d{=}7$, $10\times$ at $d{=}50$, and $2.6\times$ at $d{=}200$. Figure~\ref{fig:storage_frontier} also shows that across $d$, the Expander SAE traces a smooth decoder-sparsity and fidelity frontier where larger $d$ approaches the dense fidelity floor and smaller $d$ retains high CE-loss recovered with substantially fewer learned decoder values.

\subsection{Cross-layer and cross-model replication on Pythia}

This subsection tests whether the storage-fidelity ratio depends on the specific Pythia-70M layer 3 hook. We re-extract Pythia-70M residual-stream activations at layers $1$ and $5$ and retrain Expander-SAE at $d \in \{7, 50, 200\}$ and Dense-SAE on each layer at $k{=}64$, three seeds, matching the $200{,}000$-token train and $5{,}000$-token test split. We then repeat the experiment on Pythia-160M ($m{=}768$, $n{=}6144$) at layers $4$ and $8$ with $d \in \{10, 75, 300\}$, chosen to match the $d/m$ ratios used at $m{=}512$ for storage savings of approximately $77\times$, $10\times$, and $2.6\times$ relative to Expander $d{=}m$. Table~\ref{tab:bench_cross_layer} reports trained-encoder relative reconstruction error and CE-loss recovered.

\begin{table}[h]
    \centering
    \caption{Cross-layer and cross-model replication at $k{=}64$, three seeds (mean reported; SEM is below $0.01$ on every cell except where flagged in the body). \emph{rel err} is trained-encoder relative reconstruction error on a $5{,}000$-token held-out split, and \emph{CE rec} is CE-loss recovered against the same hook site. Per-layer (CE clean / CE zero-ablation): Pythia-70M layers 1/3/5 ($3.44/8.93$, $3.44/12.73$, $3.44/8.75$); Pythia-160M layers 4/8 ($2.96/10.96$, $2.96/11.96$).}
    \label{tab:bench_cross_layer}
    \small
    \setlength{\tabcolsep}{4pt}
    \begin{tabular}{llrrrrrr}
        \toprule
        & & \multicolumn{2}{c}{Expander $d$} & \multicolumn{2}{c}{Expander $d^\prime$} & \multicolumn{2}{c}{Dense-SAE} \\
        Model & Layer & rel err & CE rec & rel err & CE rec & rel err & CE rec \\
        \midrule
        \multicolumn{8}{l}{\emph{Pythia-70M} ($m{=}512$, $n{=}4096$, $d \in \{7, 50, 200\}$ left to right inside each Expander column-pair)} \\
        \midrule
        Pythia-70M  & 1 & \multicolumn{2}{l}{$0.570 / 0.738$} & \multicolumn{2}{l}{$0.516 / 0.819$ \quad $0.446 / 0.883$} & $0.368$ & $0.938$ \\
        Pythia-70M  & 3 & \multicolumn{2}{l}{$0.532 / 0.817$} & \multicolumn{2}{l}{$0.489 / 0.859$ \quad $0.415 / 0.902$} & $0.342$ & $0.947$ \\
        Pythia-70M  & 5 & \multicolumn{2}{l}{$0.375 / -0.347$} & \multicolumn{2}{l}{$0.341 / 0.036$ \quad $0.433 / -0.163$} & $0.607$ & $-0.550$ \\
        \midrule
        \multicolumn{8}{l}{\emph{Pythia-160M} ($m{=}768$, $n{=}6144$, $d \in \{10, 75, 300\}$ left to right inside each Expander column-pair)} \\
        \midrule
        Pythia-160M & 4 & \multicolumn{2}{l}{$0.643 / 0.742$} & \multicolumn{2}{l}{$0.588 / 0.836$ \quad $0.515 / 0.901$} & $0.439$ & $0.949$ \\
        Pythia-160M & 8 & \multicolumn{2}{l}{$0.520 / 0.767$} & \multicolumn{2}{l}{$0.461 / 0.836$ \quad $0.399 / 0.889$} & $0.332$ & $0.946$ \\
        \bottomrule
    \end{tabular}
\end{table}

The storage-fidelity ratio replicates across both the layer dimension and the model-size dimension. On Pythia-70M layers $1$ and $3$ and on both tested Pythia-160M layers, Expander-SAE recovers within $5$ to $20$ percentage points of CE of Dense-SAE while using $73\times$ ($d{=}7$ at $m{=}512$) or $77\times$ ($d{=}10$ at $m{=}768$) fewer learned decoder values, and the medium and large $d$ values continue to trace the same monotone frontier. Pythia-70M layer 5 is a regime in which all SAE architectures we tested fail to preserve LM behaviour, since Dense-SAE itself yields a negative CE-loss recovered. The residual-stream output of the last GPT-NeoX block is harder to reconstruct without breaking the unembedding than at earlier layers regardless of architecture, and we read this as a known limitation of SAE-style probes at late layers rather than a feature specific to Expander supports.

\subsection{Qwen2.5-3B cross-architecture replication}

The Pythia experiments above use a single LM family. To check whether the storage-fidelity claim transfers to a modern non-Pythia architecture at a scale where SAE storage is operationally painful, we replicate on Qwen2.5-3B, a 3B-parameter decoder-only LM with hidden size $m{=}2048$, $36$ transformer blocks, RoPE positional encoding, grouped-query attention, and SwiGLU MLPs, all of which depart substantially from Pythia's GPT-NeoX style. We extract residual-stream activations at layers $12$ and $24$ (third and two-thirds of depth, mirroring the Pythia-160M pattern) and train Expander-SAE at $d \in \{7, 30, 102\}$, where $d{=}7$ matches the headline Pythia-70M configuration and $d \in \{30, 102\}$ correspond to $d/m \in \{1.5\%, 5\%\}$. We compare against a Dense-SAE at $n{=}16{,}384$ ($8\times$ overcompleteness) and matched-parameter reduced-$n$ Dense-SAEs at $n' \in \{240, 816\}$ that match Expander's unique-parameter count at $d \in \{30, 102\}$. The matched cell at $d{=}7$ would require $n'{=}56 < 2k$ and is excluded by the NSP gate. Every SAE is trained at $k{=}64$ on three seeds with the same $200$k-token / $5$k-token train/test split as the Pythia experiments. The mask sampler in \texttt{models/expander.py} re-rolls whenever any row receives no incoming edges, which is a low-probability event at these dimensions ($d \cdot n / m \approx 56$ at $d{=}7$).

\begin{table}[h]
    \centering
    \caption{Qwen2.5-3B replication at $m{=}2048$, $n{=}16{,}384$, $k{=}64$, three seeds (mean reported; standard deviation across seeds is below $0.02$ on every cell except where flagged with $^\star$). \emph{rel err} is trained-encoder relative reconstruction error on the $5{,}000$-token held-out split, and \emph{CE rec} is CE-loss recovered against the same hook site. The matched-parameter Dense-SAE columns ($d{=}30 \to n'{=}240$, $d{=}102 \to n'{=}816$) report Dense-SAEs trained at reduced $n$ to match Expander's unique-parameter count, and the $d{=}7$ matched cell is excluded because $n'{=}56 < 2k$. CE clean / CE zero-ablation are $2.29/15.98$ at layer 12 and $2.29/23.30$ at layer 24. The starred cell ($d{=}30$ at layer 24) has one of three seeds diverge during training (rel err $1.13$, included in the reported mean); without that seed the rel err mean is $0.704$ and CE rec is $0.854$.}
    \label{tab:bench_qwen2_5_3b}
    \small
    \setlength{\tabcolsep}{4pt}
    \begin{tabular}{lrrrrrrr}
        \toprule
        & & \multicolumn{3}{c}{Expander-SAE} & \shortstack{Dense-SAE\\(full $n$)} & \multicolumn{2}{c}{Matched-$n'$ Dense-SAE} \\
        \cmidrule(lr){3-5}\cmidrule(lr){7-8}
        Layer & metric & $d{=}7$ & $d{=}30$ & $d{=}102$ & $n{=}16{,}384$ & $n'{=}240$ & $n'{=}816$ \\
        \midrule
        \multirow{2}{*}{$12$} & rel err & $0.764$ & $0.703$ & $0.681$ & $0.489$ & $0.608$ & $0.556$ \\
                              & CE rec  & $0.828$ & $0.894$ & $0.919$ & $0.983$ & $0.906$ & $0.945$ \\
        \midrule
        \multirow{2}{*}{$24$} & rel err & $0.693$ & $0.848^\star$ & $0.659$ & $0.591$ & $0.541$ & $0.488$ \\
                              & CE rec  & $0.875$ & $0.821^\star$ & $0.884$ & $0.936$ & $0.911$ & $0.946$ \\
        \bottomrule
    \end{tabular}
\end{table}

The monotone $d \mapsto$ fidelity frontier survives the jump to a 3B-parameter modern decoder-only LM with a different architecture. At layer $12$, CE-recovered scales as $0.83 \to 0.89 \to 0.92 \to 0.98$ with $d \in \{7, 30, 102, m\}$, and at $d{=}7$ Expander retains $84\%$ of the full Dense-SAE's recovery while using a decoder $293\times$ smaller in learned values ($114{,}688$ against $33{,}554{,}432$). Layer $24$ shows the same qualitative pattern, with CE-recovered $0.88$ at $d{=}7$ rising to $0.94$ at full Dense-SAE. The matched-parameter dense baselines at $n' \in \{240, 816\}$ score $0.91$ and $0.95$ at layer $12$, and $0.91$ and $0.95$ at layer $24$, which sits moderately above the corresponding Expander rows. At this scale, a fully-flexible dense decoder with the same parameter count is therefore somewhat better in pure CE-recovered terms, and the case for Expander supports rests on three structural advantages. First, Expander masks are compatible with the $(\texttt{values}, \texttt{rows})$ flat layout that makes structured-OMP-style decoding tractable (Appendix~\ref{app:omp}). Second, the mask is regenerable from an $8$-byte seed (Table~\ref{tab:bench_storage}). Third, Expander is Pareto-optimal at the smallest $d$, where the matched-parameter dense baseline is excluded by the NSP gate and the dense full-$n$ baseline is $1170\times$ larger. Two layer-$24$-specific observations follow. First, Dense-SAE at $n'{=}816$ outperforms Dense-SAE at full $n{=}16{,}384$ on rel-err ($0.488$ against $0.591$), suggesting the residual stream at this layer is intrinsically lower-rank than at layer $12$ and that overcomplete dictionaries waste capacity. Second, one of the three seeds at $d{=}30$ diverges during training (rel err $1.13$ on both the original run and a retrain with the same seed), which is included in the reported mean. Whether the matched-dense gap closes at $7$B+ models, and whether seed instability at deep Qwen layers extends to other model families, is left for future work.

\section{Controls: full results}
\label{app:controls_full}

This appendix reports the full per-control results for the support-structure controls summarised in Section~\ref{sec:exp_controls}, and explains why our primary controls fix $(m,n,k)$ rather than the learned-value count.

Parameter-matched dense SAEs may seem like the natural baseline, but matching learned decoder values forces a dense SAE to use only $n'_{\mathrm{dense}} = dn/m$ features, which shifts $\delta = m/n$, changes the dictionary overcompleteness, and sometimes makes the dictionary undercomplete or smaller than the target sparsity level. Sparse-recovery behaviour depends on the problem geometry, with compressed-sensing phase transitions described asymptotically at fixed $\delta = m/n$ and $\rho = k/m$~\citep{donoho2009observed}. Our primary controls therefore preserve $(m,n,k,d)$ and change only the support structure, using clustered sparse masks and pruned-retuned dense supports. The reduced-$n$ Dense-SAE comparison in Appendix~\ref{app:controls_matched} is reported as a sanity check rather than a primary baseline.

\subsection{Clustered-sparse and the dead-feature pathology}
\label{app:controls_clustered}

This subsection isolates whether the low dead-feature rate of the Expander mask comes from column sparsity itself or from the diversity of the tied supports. We add a Clustered-sparse control matched to Expander-SAE in $(m,n,k,d)$, learned-decoder-value count, optimiser, training steps, and seeds, differing only in the mask. The $m$ rows are partitioned into $G = \lfloor m/d \rfloor$ disjoint blocks $B_0, \ldots, B_{G-1}$ of size $d$, and each column $j$ is assigned to one block uniformly at random with $\supp(\mathbf{w}_j) = B_{g_j}$. Columns in the same block therefore share an identical support, which breaks the expansion property since a size-$s$ subset of columns concentrated in one block contains only $d$ distinct neighbours, so $\widehat{\varepsilon}_{\mathrm{greedy}} \to 1 - 1/s$. We measure dead-feature fraction on the same trained grid as Appendix~\ref{app:cross_model_full} ($m{=}512$, $n{=}4096$, $d \in \{7, 50, 200\}$) over the same held-out activations, counting a feature as dead if it never fires under the convention $x_j \neq 0$.

\begin{figure}[h]
    \centering
    \includegraphics[width=\linewidth]{figures/support_structure_3panel.pdf}
    \caption{Expander-SAE against Clustered-sparse on three metrics at matched $(d, k)$ with $m{=}512$, $n{=}4096$, trained encoder. Solid lines are Expander-SAE, dashed lines are Clustered-sparse at the matching $d$, and black is the Dense-SAE reference. The left panel shows reconstruction error, the middle panel shows corrected CE-loss recovered, and the right panel shows dead-feature fraction on a log scale.}
    \label{fig:support_structure}
\end{figure}

Expander-SAE acts as a structural regulariser against dead features. Whenever the decoder column for feature $j$ aligns with any sample's residual, the encoder row for $j$ reads activations on the same $d$ neurons and picks it up immediately, so dead-feature rates remain low across $d$. Clustered-sparse is competitive on reconstruction at small and medium $d$, but destroying support diversity drives a feature-usage pathology at large $d$, with the dead-feature rate at $d{=}200$ rising by roughly $100\times$ relative to Expander-SAE at the same column sparsity. Column sparsity alone therefore does not explain the parameter-efficiency frontier, and the diversity of the support also matters.

\subsection{Pruned-retuned dense}
\label{app:controls_pruned}

This subsection tests whether a support extracted from a pre-trained dense decoder carries a more useful geometry than a random $d$-regular expander. The Pruned-retuned dense control starts from a trained dense-tied SAE at the matching $(m, n, k, \text{seed})$, forms the sparse mask by keeping the $d$ rows of largest $|w_{ij}|$ per decoder column, initialises the sparse decoder values from the retained dense weights, column-normalises, and fine-tunes only the retained values for $5000$ steps under the Expander training recipe.

\begin{figure}[h]
\centering
\includegraphics[width=\linewidth]{figures/expander_vs_pruned_retuned_dense.pdf}
\caption{Expander-SAE against Pruned dense at matched $(d, k)$, three seeds (mean $\pm$ SEM). Solid lines are Expander-SAE and dotted lines are Pruned dense. Panels show trained-encoder reconstruction error and corrected CE-loss recovered.}
\label{fig:bench_pruned}
\end{figure}

Supports extracted from a trained dense SAE close most of the reconstruction gap to dense at every $d$ tested, but require a full dense pretraining pass on top of the sparse retune. The dense-derived support therefore yields a real geometry advantage, paid for in two training passes.

\subsection{Reduced-$n$ dense at matched parameter budget}
\label{app:controls_matched}

This subsection reports the parameter-matched comparison that the opening discussion flags as not isolating the support effect. Reducing $n$ to match learned parameters shifts $\delta = m/n$, makes the dictionary undercomplete for small $d$, and can drop $n$ below the target sparsity $k$, so the comparison conflates support structure with problem geometry. We run it as a budget-matched sanity check. For each Expander-SAE $(m{=}512, n{=}4096, d)$ at $k{=}64$ we train three reduced-$n$ dense controls at the matching learned-parameter budget. Dense-tied uses $n_{\mathrm{tied}} = dN/M$ to match Expander's $dn$ values against tied dense's $mn$, while Dense-SAE (independent encoder, warm-tied init) and Dense rand-init (independent encoder, random init) both use $n_{\mathrm{indep}} = dN/(2M)$ since they have $2mn$ parameters. All controls use the same optimiser, learning-rate schedule, and held-out evaluation as Appendix~\ref{app:cross_model_full}, and configurations that fall below the null-space property $n > 2k$ are excluded.

\begin{figure}[h]
    \centering
    \includegraphics[width=\linewidth]{figures/matched_params_comparison.pdf}
    \caption{CE-loss recovered at $k{=}64$ versus learned-parameter budget (trained encoder). Each group of bars sits at the budget shared by an Expander-SAE configuration ($d \in \{7, 30, 50, 100, 200\}$ at $n{=}4096$) and three dense baselines reduced to the same budget by lowering $n$, annotated above each bar. Bars show min/max range across seeds.}
    \label{fig:bench_matched}
\end{figure}

At every shared budget, Expander-SAE recovers more CE than the reduced-$n$ dense baselines, with gaps of $0.10$ to $0.21$. The parameter saving is therefore not equivalent to running a smaller dense SAE, since reducing $n$ shrinks the dictionary along recovery-relevant axes ($\delta$, overcompleteness, and sometimes $n < k$) while the Expander keeps the full $n{=}4096$ dictionary and reduces only the per-column degree.

\section{Full storage breakdown and empirical metrics}
\label{app:storage_practical_full}

This appendix collects the complete storage breakdown and per-architecture empirical metrics referenced from the practical summary in Section~\ref{sec:exp_practical}, and records the parameter choices for the Clustered-sparse and Pruned-retuned dense controls.

The block count $G = \lfloor m/d \rfloor$ for the Clustered-sparse mask varies substantially across the three trained degrees. At $m{=}512$, the clustered mask yields $G{=}73$ disjoint blocks at $d{=}7$ with one row unused, $G{=}10$ blocks at $d{=}50$ with $12$ rows unused, and $G{=}2$ blocks at $d{=}200$ with $112$ rows unused. With $n{=}4096$ features and columns assigned to blocks uniformly, each block is shared in expectation by $n/G \approx 56$ columns at $d{=}7$, $\approx 410$ at $d{=}50$, and $\approx 2{,}048$ at $d{=}200$. The Clustered-sparse $d{=}200$ model therefore has only two possible column supports in the whole dictionary, so approximately half of all features share an identical $200$-row support with any randomly chosen reference column. This collapse of support diversity at large $d$ is intentional, since it lets us check whether support diversity helps avoid clustered-mask pathologies at large $d$ or whether the size $d$ of individual supports alone explains the observed parameter-efficiency frontier. We train Clustered-sparse at $d \in \{7, 50, 200\}$, $k \in \{16, 32, 64, 128\}$, three seeds, with the same $5000$-step cosine-schedule optimiser settings as Expander-SAE-d (Appendix~\ref{app:training_details}). Pruned-retuned dense is trained at $d \in \{7, 50, 200\}$, $k \in \{16, 64, 128\}$, three seeds, using the same per-seed dense-tied SAE as its source.

\begin{table}[h]
    \centering
    \caption{Storage breakdown at $m{=}512$, $n{=}4096$ for each architecture and $d$, in KiB unless otherwise noted. \emph{Learned values} counts the unique decoder weights only ($d \cdot n$ float32 entries for the sparse architectures and $m \cdot n$ for Dense-SAE and Expander $d{=}m$). \emph{Decoder + rows} is the on-disk decoder storage including int32 row indices in the $(d \cdot n,)$ flat layout used by the structured kernels, with no \texttt{indptr} array since every column has exactly $d$ nonzeros at known positions; for dense architectures this equals the learned-values column. \emph{Encoder + biases} is the additional inference storage, namely $\mathbf{b}_{\mathrm{dec}}$ and $\mathbf{b}_{\mathrm{enc}}$ for tied-encoder models plus the full $(n, m)$ encoder weights for Dense-SAE. \emph{Total} adds the previous two columns plus $8$ bytes for the mask seed. \emph{Mask via seed} reports the overhead of regenerating the binary mask deterministically from a single \texttt{int64}, with no mask for Dense-SAE. The \emph{Ratio} column is the learned-values ratio against Expander-SAE at $d{=}m{=}512$, equal to $m/d$ by construction.}
    \label{tab:bench_storage}
    \small
    \setlength{\tabcolsep}{5pt}
    \begin{tabular}{llrrrrrrr}
        \toprule
        Architecture & $d$ & \shortstack{(1) Learned\\values (KiB)} & Ratio & \shortstack{(2) Decoder\\$+$ rows (KiB)} & \shortstack{(3) Encoder\\$+$ biases (KiB)} & \shortstack{(4) Total\\(KiB)} & \shortstack{(5) Mask\\via seed} \\
        \midrule
        Dense-SAE                & $512$       & $8{,}192.0$ & $1\times$ & $8{,}192.0$ & $8{,}210.0$ & $16{,}402.0$ & --   \\
        \midrule
        \multirow{6}{*}{Expander-SAE}
                                 & $7$         & $\mathbf{112.0}$  & $\mathbf{73\times}$ & $\mathbf{224.0}$  & $18.0$ & $\mathbf{242.0}$  & $8$ B \\
                                 & $30$        & $480.0$    & $17\times$  & $960.0$    & $18.0$ & $978.0$    & $8$ B \\
                                 & $50$        & $800.0$    & $10\times$  & $1{,}600.0$ & $18.0$ & $1{,}618.0$ & $8$ B \\
                                 & $100$       & $1{,}600.0$ & $5\times$   & $3{,}200.0$ & $18.0$ & $3{,}218.0$ & $8$ B \\
                                 & $200$       & $3{,}200.0$ & $2.6\times$ & $6{,}400.0$ & $18.0$ & $6{,}418.0$ & $8$ B \\
                                 & $512\,(=m)$ & $8{,}192.0$ & $1\times$   & $8{,}192.0$ & $18.0$ & $8{,}210.0$ & $8$ B \\
        \midrule
        \multirow{3}{*}{Clustered-sparse}
                                 & $7$            & $112.0$    & $73\times$  & $224.0$    & $18.0$ & $242.0$    & $8$ B \\
                                 & $50$           & $800.0$    & $10\times$  & $1{,}600.0$ & $18.0$ & $1{,}618.0$ & $8$ B \\
                                 & $200$          & $3{,}200.0$ & $2.6\times$ & $6{,}400.0$ & $18.0$ & $6{,}418.0$ & $8$ B \\
        \midrule
        \multirow{3}{*}{Pruned dense}
                                 & $7$         & $112.0$    & $73\times$  & $224.0$    & $18.0$ & $242.0$    & $8$ B \\
                                 & $50$        & $800.0$    & $10\times$  & $1{,}600.0$ & $18.0$ & $1{,}618.0$ & $8$ B \\
                                 & $200$       & $3{,}200.0$ & $2.6\times$ & $6{,}400.0$ & $18.0$ & $6{,}418.0$ & $8$ B \\
        \bottomrule
    \end{tabular}
\end{table}

\begin{table}[h]
    \centering
    \caption{Empirical metrics at $m{=}512$, $n{=}4096$, $k{=}64$ on Pythia-70M layer 3 (CE clean $3.44$, CE zero-ablation $12.73$). All metrics use the trained encoder, with entries reporting mean $\pm$ SEM over three seeds except where marked $\dagger$ (single seed) or $\ddagger$ (two seeds) due to the original sweep grid. \emph{Training passes} counts the full SAE training runs needed, with Pruned dense requiring a dense pretraining pass before the sparse retune. \emph{Novel ($J{<}0.1$)} is the fraction of features whose best-match activation-Jaccard against Dense-SAE is below $0.1$ from the single-seed feature-analysis pipeline. The corresponding storage breakdown is in Table~\ref{tab:bench_storage}.}
    \label{tab:bench_practical}
    \small
    \setlength{\tabcolsep}{5pt}
    \begin{tabular}{llrrrrr}
        \toprule
        Architecture & $d$ & \shortstack{Training\\passes} & \shortstack{rel\\err} & \shortstack{CE\\rec.} & \shortstack{Dead\\frac} & \shortstack{Novel$^\dagger$\\($J{<}0.1$)} \\
        \midrule
        Dense-SAE                & $512$       & $1$          & $0.342{\pm}0.000$ & $0.947{\pm}0.001$ & $0.6\%{\pm}0.0$  & --                \\
        \midrule
        \multirow{6}{*}{Expander-SAE}
                                 & $7$         & $1$          & $0.541{\pm}0.006$ & $0.817{\pm}0.001$ & $0.1\%{\pm}0.0$  & $\mathbf{81.1\%}$ \\
                                 & $30$        & $1$          & $0.508{\pm}0.001$ & $0.851{\pm}0.001$ & $0.2\%{\pm}0.0$  & $72.6\%$          \\
                                 & $50$        & $1$          & $0.489{\pm}0.003$ & $0.859{\pm}0.002$ & $0.3\%{\pm}0.0$  & $69.4\%$          \\
                                 & $100$       & $1$          & $0.467{\pm}0.003$ & $0.882{\pm}0.000$ & $0.3\%{\pm}0.1$  & $66.8\%$          \\
                                 & $200$       & $1$          & $0.415{\pm}0.001$ & $0.902{\pm}0.001$ & $0.7\%{\pm}0.0$  & $65.5\%$          \\
                                 & $512\,(=m)$ & $1$          & $0.388{\pm}0.008$ & $0.937{\pm}0.000$ & $0.3\%{\pm}0.1$  & $5.8\%$           \\
        \midrule
        \multirow{3}{*}{Clustered-sparse}
                                 & $7$         & $1$          & $0.531{\pm}0.001$ & $0.810{\pm}0.001$ & $0.4\%{\pm}0.0$  & $82.1\%$          \\
                                 & $50$        & $1$          & $0.478{\pm}0.001$ & $0.867{\pm}0.000$ & $1.5\%{\pm}0.2$  & $78.4\%$          \\
                                 & $200^\ddagger$ & $1$       & $0.518{\pm}0.012$ & $0.829{\pm}0.016$ & $\mathbf{6.2\%{\pm}0.3}$ & $77.3\%$ \\
        \midrule
        \multirow{3}{*}{Pruned dense}
                                 & $7$         & $\mathbf{2}$ & $0.520{\pm}0.001$ & $0.829{\pm}0.001$ & $0.3\%{\pm}0.1$  & $82.3\%$          \\
                                 & $50$        & $\mathbf{2}$ & $0.450{\pm}0.004$ & $0.885{\pm}0.005$ & $0.4\%{\pm}0.1$  & $62.2\%$          \\
                                 & $200$       & $\mathbf{2}$ & $0.363{\pm}0.002$ & $0.934{\pm}0.001$ & $0.4\%{\pm}0.0$  & $15.1\%$          \\
        \bottomrule
    \end{tabular}
\end{table}

Across the trained grid, Expander-SAE-d delivers substantial decoder-storage reductions and high CE-loss recovery in a single training pass. The controls clarify which gains come from column sparsity, support diversity, or a dense-derived support, and which require additional training.

\section{CE-loss recovered protocol}
\label{app:ce_protocol}

This appendix specifies the CE-loss recovered evaluation used as the downstream functional check throughout the paper. For each evaluation sequence we run the clean language model and record the next-token cross-entropy over non-padding token positions, then run two hooked evaluations at the same layer and hook site used for SAE training. In the zero-ablation run, the layer-$\ell$ residual-stream activation at the hook site is replaced by the all-zero vector. In the reconstruction run, the same activation is replaced by the SAE reconstruction $\hat{\mathbf{h}} = \mathbf{W}_{\mathrm{dec}}\mathbf{x} + \mathbf{b}_{\mathrm{dec}}$. The reported metric is
\[
    \mathrm{CE\ recovered}
    \;=\;
    \frac{
        \mathrm{CE}_{\mathrm{zero}} - \mathrm{CE}_{\mathrm{recon}}
    }{
        \mathrm{CE}_{\mathrm{zero}} - \mathrm{CE}_{\mathrm{clean}}
    },
\]
where $1$ matches the clean model's CE and $0$ matches zero-ablation. The CE evaluation uses $1000$ held-out Pile sequences of length $128$, disjoint from the sequences used to build the SAE training activation cache. The released experiment configuration specifies the literal hook name, whether padding tokens are excluded from CE, and whether the hook is applied to every token position or only to non-padding positions.

\section{Decoder-cosine novelty}
\label{app:decoder_cosine_novelty}

This appendix reports decoder-cosine novelty as a complement to the activation-Jaccard novelty in Appendix~\ref{app:novelty}. A $d$-sparse decoder column has bounded cosine similarity to a dense column whose mass is spread evenly across coordinates, irrespective of whether the two represent the same feature, so the metric is biased downward at small $d$ for purely geometric reasons. At $d{=}m$ the bias vanishes and the cosine and Jaccard metrics agree.

\begin{figure}[h]
    \centering
    \includegraphics[width=0.6\linewidth]{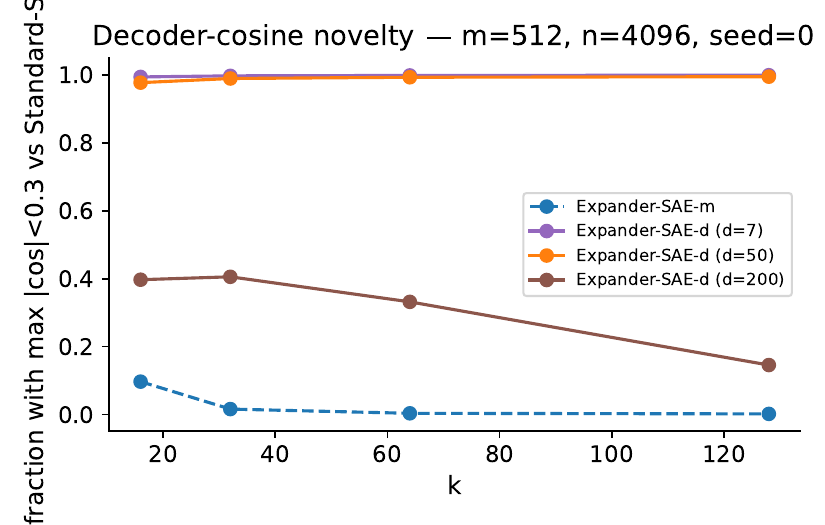}
    \caption{Fraction of Expander decoder columns whose maximum absolute cosine against any Dense-SAE decoder column is below $0.1$.}
    \label{fig:bench_decoder_cosine}
\end{figure}

\section{Synthetic sparse-recovery experiment}
\label{app:synthetic_omp}

This appendix reports a classical compressed-sensing sanity study on $d$-regular Expander sensing matrices, decoupled from the language-model activation setting, that isolates the recovery behaviour of OMP on the Expander support geometry alone. We fix the decoder values analytically so the column-flatness factor $\beta(\mathbf{W})$ is minimised by construction and there is no encoder to train, and we draw measured signals from the exact model the theorem assumes. Recovery failure at a given $(d, k_{\mathrm{synth}})$ is therefore attributable to the mask rather than to suboptimal learned weights. For each $(d, k_{\mathrm{synth}}, \text{mask seed}, \sigma)$ we sample a $d$-regular bipartite mask $\mathbf{M}$ and construct a balanced signed decoder with $\mathbf{W}_{ij} = \pm 1/\sqrt{d}$ on the mask support under independent random signs, and zero elsewhere. On each trial we sample a $k_{\mathrm{synth}}$-sparse signal $\mathbf{x}_S \sim \mathcal{N}(0, I)$ on a uniformly drawn support $S$ of size $k_{\mathrm{synth}}$, form the measurement $\mathbf{h} = \mathbf{W}\mathbf{x}$, optionally add noise $\sigma \cdot (\|\mathbf{W}\mathbf{x}\|_2 / \sqrt{m}) \cdot \mathbf{z}$ with $\mathbf{z} \sim \mathcal{N}(0, I)$, and run OMP for exactly $k_{\mathrm{synth}}$ iterations to record whether the recovered support equals $S$. We fix $m{=}512$, $n{=}4096$, $100$ trials per configuration, $3$ mask seeds, and $\sigma \in \{0, 0.01, 0.05\}$.

\begin{figure}[h]
\centering
\includegraphics[width=\linewidth]{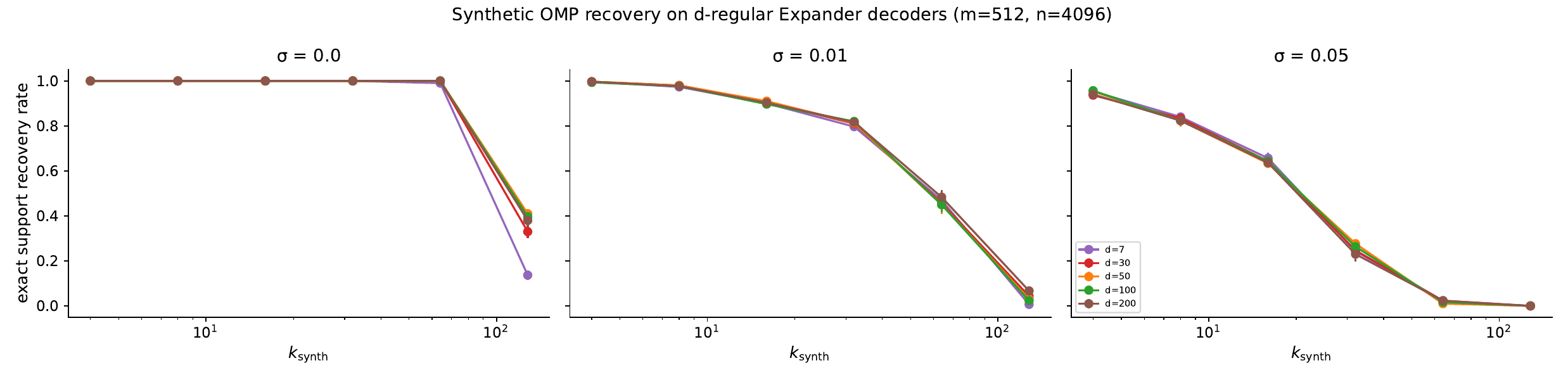}
\caption{Exact support-recovery rate as a function of $k_{\mathrm{synth}}$, with one curve per Expander degree $d \in \{7, 30, 50, 100, 200\}$ and one panel per noise level $\sigma$. Points are means over three mask seeds and error bars show the standard error.}
\label{fig:synthetic_omp_recovery}
\end{figure}

\begin{figure}[h]
\centering
\includegraphics[width=\linewidth]{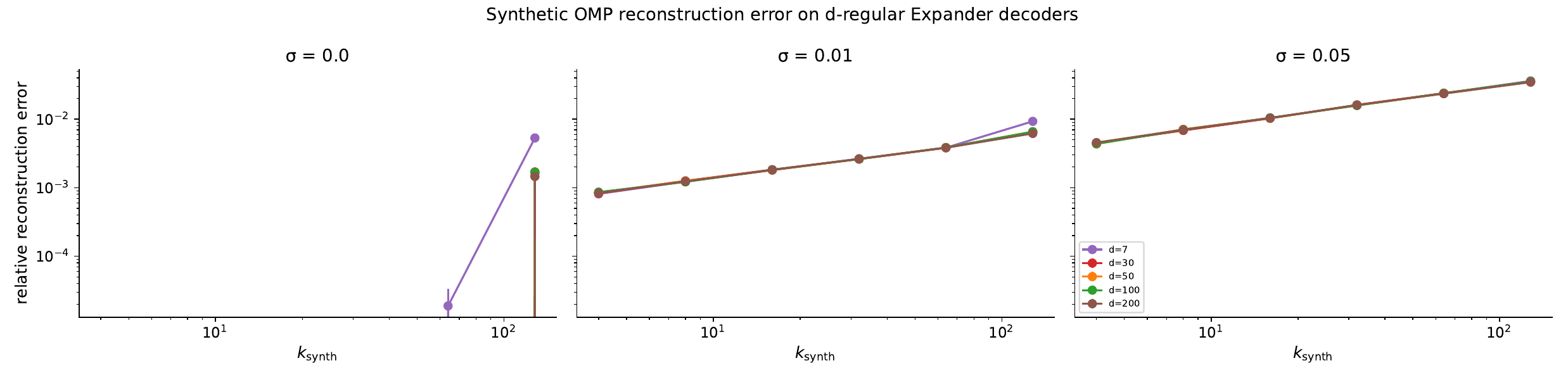}
\caption{Relative reconstruction error $\|\mathbf{W}\hat{\mathbf{x}} - \mathbf{W}\mathbf{x}\|_2 / \|\mathbf{W}\mathbf{x}\|_2$ for the same synthetic experiment as Figure~\ref{fig:synthetic_omp_recovery}, on log-log axes. Points are means over three mask seeds and error bars show the standard error.}
\label{fig:synthetic_omp_recon}
\end{figure}

\section{Feature Dashboards}
\label{app:features}

This appendix presents a qualitative look at individual features learned by the Expander SAE at $d{=}30$.
We sample features from two regimes: \emph{novel} features, defined as having best-match Jaccard overlap below $0.1$ against the dense SAE baselines considered in Appendix~\ref{app:novelty}, and \emph{shared} features with best-match Jaccard above $0.5$ against some dense feature.
For each feature we report the Feature ID, a heuristic category (\emph{neuron-pattern}, \emph{positional}, \emph{other}, or \emph{incoherent}) derived from the entropy of its firing positions, preceding tokens, target tokens, and the co-activation pattern of the $d$ underlying neurons; the best-match Jaccard overlap against any dense-baseline feature; the split-half firing-rate ratio as a reliability check; the top target tokens by firing frequency; and a small number of sample contexts, with the target token highlighted in brackets.
An interpretive note follows each feature where we can offer one.

\subsection*{Novel features (best-match Jaccard $<0.1$)}

\paragraph{Feature 2113 (neuron-pattern).}
Jaccard $0.085$; rate ratio $0.902$.
Top targets: \texttt{\textbackslash n}, \texttt{>}, ``\,don'', \texttt{();}, ``\,a''.
Sample contexts (target in brackets):
\begin{itemize}
    \item \texttt{...\ "scale" : "2x", [",]}
    \item \texttt{...; return 2; end;; f(); Where[();]}
    \item \texttt{...screen-width" : ">145" ["]}
    \item \texttt{...documentation.[ers]}
\end{itemize}
No single dominant target; the underlying $30$ neurons co-activate at well above the random-co-firing baseline, which is what places it in the \emph{neuron-pattern} category.

\paragraph{Feature 1834 (neuron-pattern).}
Jaccard $0.082$; rate ratio $0.977$.
Top targets: ``\,don'' ($24$), ``\,doesn'' ($22$), \texttt{,} ($20$), \texttt{-} ($14$), ``\,number'' ($11$).
Top preceding tokens: ``\,the'' ($35$), ``\,it'' ($12$), \texttt{.} ($10$).
Sample contexts:
\begin{itemize}
    \item \texttt{...\ it simply [doesn]'t give me}
    \item \texttt{...A: SEDE data [ isn]'t live}
    \item \texttt{...with a single font and it [ doesn]'t turn to}
    \item \texttt{...this [ isn]'t a bigger}
\end{itemize}
Plausibly the ``second piece of an English contraction whose first piece is \emph{n}+\emph{not}'' (e.g.\ the \texttt{doesn} in \emph{doesn't}, the \texttt{isn} in \emph{isn't}).
This feature is mostly missed by the dense SAE.

\paragraph{Feature 1587 (other).}
Jaccard $0.091$; rate ratio $0.847$.
Top targets: ``\,of'' ($62$), ``\,the'' ($29$), ``\,for'' ($10$).
Top preceding tokens: ``\,of'' ($14$), ``\,some'' ($9$), ``\,one'' ($8$).
Plausibly an ``of-phrase head noun'' detector, firing on the content word immediately following an \emph{of}-preposition.
Its absence from the dense SAE baselines considered here is consistent with the interpretation that the Expander column geometry supports a parameterisation that the dense SAE does not preferentially select.

\paragraph{Feature 3248 (incoherent).}
Jaccard $0.091$; rate ratio $0.995$; target entropy $6.62$ bits.
Fires on a highly diverse set of targets (\texttt{d}, \texttt{\textbackslash n}, \texttt{int}, \texttt{lib}, ``\,F'') with no consistent left context.
The high rate ratio indicates the firing pattern is reproducible across data splits, but none of our heuristics identifies a dominant concept.

\paragraph{Feature 3045 (incoherent).}
Jaccard $0.089$; rate ratio $0.970$; target entropy $5.98$ bits.
Top targets are punctuation and digits (\texttt{\_}, \texttt{.}, \texttt{2}).
Plausibly a ``code-like context'' detector, but the target distribution is too diffuse to isolate a single concept under our metrics.

\subsection*{Shared features (best-match Jaccard $>0.5$)}

\paragraph{Feature 2967 (neuron-pattern).}
Jaccard $0.962$; target entropy $1.45$ bits.
Top targets: \texttt{.} ($110$), \texttt{\textbackslash n} ($78$).
A clean ``end-of-sentence'' detector.
The dense SAE baselines considered here contain a near-identical feature, as expected.

\paragraph{Feature 1500 (neuron-pattern).}
Jaccard $>0.5$.
Matches a dense-SAE feature cleanly and is classified as neuron-pattern under our heuristics.

\paragraph{Feature 2909 (positional).}
Jaccard $0.929$; target entropy $6.32$ bits, but position entropy is below threshold.
Top targets: ``\,In'' ($11$), ``\,M'' ($10$), \texttt{\#} ($9$), ``\,package'' ($9$), ``\,F'' ($8$).
Sample contexts:
\begin{itemize}
    \item \texttt{...Every industry has its [Every]}
    \item \texttt{...Serum alpha-[Ser]}
    \item \texttt{...Debbie Gregory D[Deb]}
    \item \texttt{...Frederick L[Fred]}
    \item \texttt{...Buy Nureflex [Buy]}
\end{itemize}
A ``beginning-of-token'' detector firing at sentence-initial positions.
The high target-token entropy but low position entropy is the signature that places this feature in the \emph{positional} rather than the \emph{neuron-pattern} category.

\section{Theory for Expander SAE decoders}
\label{app:expander_sae_theory}

This appendix proves Theorem~\ref{thm:expander_sae_recovery} via a weighted version of the standard expander recovery argument. The proof has two ingredients. The binary mask $\mathbf{M}$ controls how many collisions occur between feature supports, and the flatness factor $\beta(\mathbf{W}_{\mathrm{dec}})$ controls how much learned decoder mass can sit on those colliding edges. We first establish a combinatorial bound on collisions using the expansion property, then show that column normalisation pins down the $\ell_1$-mass of each decoder column, and finally combine the two to prove the identifiability theorem and its corollaries.

Throughout this appendix we write $\mathbf{W} = \mathbf{W}_{\mathrm{dec}}$ for brevity. We assume the columns of $\mathbf{W}$ are unit-normalised and supported on a left $d$-regular mask $\mathbf{M}$, so that
\[
    \|\mathbf{w}_j\|_2 = 1,
    \qquad
    \mathbf{W}_{ij} = 0 \text{ whenever } \mathbf{M}_{ij} = 0.
\]
For $J \subseteq [n]$, the neighbourhood of $J$ in the bipartite graph defined by $\mathbf{M}$ is
\[
    \Gamma(J)
    \;:=\;
    \{\,i \in [m] : \mathbf{M}_{ij} = 1 \text{ for some } j \in J\,\},
\]
and the flatness factor is
\[
    \beta(\mathbf{W})
    \;:=\;
    \sqrt{d}\,
    \max_{i,j:\,\mathbf{M}_{ij} = 1}
    |\mathbf{W}_{ij}|.
\]

\subsection{Expansion gives few collisions}

This subsection translates the expansion property of $\mathbf{M}$ into a bound on the cumulative number of collision edges that appear when we add columns to a set one at a time. The argument is purely combinatorial and uses only the expansion of $\mathbf{M}$, not the values of $\mathbf{W}$.

Let $S \subseteq [n]$ with $|S| \le s$, and order its elements as
\[
    j_1, j_2, \dots, j_{|S|}.
\]
For $r = 1, \dots, |S|$, define the prefix
\[
    S_r \;:=\; \{j_1, \dots, j_r\}
\]
and the set of collision edges created by $j_r$ relative to previous columns,
\[
    B_r
    \;:=\;
    \Gamma(j_r) \cap \Gamma(S_{r-1}),
    \qquad
    b_r \;:=\; |B_r|,
\]
so $b_r$ counts the neighbours of $j_r$ that were already used by earlier columns in the prefix.

\begin{lemma}[Prefix collision bound]
\label{lem:prefix_collision_bound}
If $\mathbf{M}$ is a left $d$-regular $(s, \varepsilon, d)$-expander, then for every ordering of every $S \subseteq [n]$ with $|S| \le s$,
\[
    \sum_{q=1}^r b_q
    \;\le\;
    \varepsilon d r
    \qquad
    \text{for every } r \le |S|.
\]
\end{lemma}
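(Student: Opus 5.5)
The plan is a telescoping count of neighbourhood sizes along the prefix, followed by a single application of expansion to the prefix $S_r$. The sets $S_r$ are nested, so $\Gamma(S_r) = \Gamma(S_{r-1}) \cup \Gamma(j_r)$. Left $d$-regularity gives $|\Gamma(j_r)| = d$. By inclusion--exclusion, and using $b_r = |\Gamma(j_r)\cap\Gamma(S_{r-1})|$,
\[
    |\Gamma(S_r)| \;=\; |\Gamma(S_{r-1})| + d - b_r .
\]
Summing this identity from $q=1$ to $r$, with $\Gamma(S_0)=\emptyset$ (so $b_1=0$), gives the exact identity
\[
    |\Gamma(S_r)| \;=\; d r - \sum_{q=1}^r b_q .
\]

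The bound then follows by applying the expander property to $S_r$ itself. Since $|S_r| = r \le |S| \le s$ and the elements $j_1,\dots,j_r$ are distinct, $(s,\varepsilon,d)$-expansion yields $|\Gamma(S_r)| \ge (1-\varepsilon) d r$. Combining this with the identity gives $\sum_{q\le r} b_q = dr - |\Gamma(S_r)| \le \varepsilon d r$, which is the claim. Nothing depends on the chosen ordering: the identity holds for any ordering, and every prefix is a set of size at most $s$.

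The only step needing care is the inclusion--exclusion count, specifically checking that $b_r$ as defined (neighbours of $j_r$ already hit by earlier columns) is exactly the overlap term. Each neighbour of $j_r$ is counted once in $\Gamma(j_r)$, so no multiplicity issue arises. This holds because $\mathbf{M}$ is binary, so a column cannot have repeated edges to the same row. I do not expect any genuine obstacle beyond this.
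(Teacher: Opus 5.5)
Your proposal is correct and follows the paper's proof essentially verbatim: the telescoping identity $|\Gamma(S_r)| = dr - \sum_{q=1}^r b_q$ (from each new column contributing $d - b_q$ new neighbours), followed by expansion applied to the prefix $S_r$, whose size $r$ is at most $s$. Your explicit inclusion--exclusion check of the overlap term is a harmless extra detail that the paper leaves implicit.
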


\begin{proof}
When column $j_q$ is added to the prefix, it contributes $d - b_q$ new neighbours. Hence
\[
    |\Gamma(S_r)|
    \;=\;
    \sum_{q=1}^r (d - b_q)
    \;=\;
    dr - \sum_{q=1}^r b_q.
\]
By expansion,
\[
    |\Gamma(S_r)|
    \;\ge\;
    (1 - \varepsilon) dr.
\]
Therefore
\[
    dr - \sum_{q=1}^r b_q
    \;\ge\;
    (1 - \varepsilon) dr,
\]
which gives
\[
    \sum_{q=1}^r b_q
    \;\le\;
    \varepsilon d r.
\]
\end{proof}

The prefix bound controls collision counts uniformly over $r$, but the proof of the main theorem needs a weighted version in which each collision is reweighted by an entry of the latent vector. The next lemma provides that weighted bound by combining Lemma~\ref{lem:prefix_collision_bound} with summation by parts.

\begin{lemma}[Weighted collision bound]
\label{lem:weighted_collision_bound}
Let $a_1 \ge a_2 \ge \cdots \ge a_s \ge 0$. Under the assumptions of Lemma~\ref{lem:prefix_collision_bound},
\[
    \sum_{r=1}^s b_r a_r
    \;\le\;
    \varepsilon d
    \sum_{r=1}^s a_r.
\]
\end{lemma}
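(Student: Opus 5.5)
The plan is to apply Abel summation (summation by parts) to the sequence $b_r$, using the partial sums $B^{(r)} := \sum_{q=1}^r b_q$ with $B^{(0)} := 0$. Here $s$ plays the role of $|S|$, and the $b_r$ are attached to a fixed ordering $j_1,\dots,j_s$ of a set $S$ with $|S|\le s$. The point is that Lemma~\ref{lem:prefix_collision_bound} controls exactly these partial sums uniformly in $r$. The monotonicity of the weights $a_r$ lets us transfer that uniform prefix control to the weighted sum.

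\textbf{Step 1 (Abel summation).} Write $b_r = B^{(r)} - B^{(r-1)}$ and rearrange:
\[
\sum_{r=1}^s b_r a_r \;=\; \sum_{r=1}^{s} B^{(r)}(a_r - a_{r+1}),
\]
with the convention $a_{s+1} := 0$. This identity holds for arbitrary sequences and is a one-line telescoping check.

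\textbf{Step 2 (use monotonicity and the prefix bound).} Since $a_1\ge\cdots\ge a_s\ge 0$ and $a_{s+1}=0$, every coefficient satisfies $a_r - a_{r+1}\ge 0$. Each $B^{(r)}$ may therefore be replaced by its upper bound $\varepsilon d r$ from Lemma~\ref{lem:prefix_collision_bound}, which applies for every $r\le s$. This gives
\[
\sum_{r=1}^s b_r a_r \;\le\; \varepsilon d \sum_{r=1}^{s} r\,(a_r - a_{r+1}).
\]

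\textbf{Step 3 (undo the summation by parts).} Apply the same identity in reverse with $r$ in place of $B^{(r)}$. The increments of $r$ are all $1$, so
\[
\sum_{r=1}^{s} r\,(a_r - a_{r+1}) \;=\; \sum_{r=1}^s a_r,
\]
which yields the claim.

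The only delicate point is Step 2. There the nonnegativity of every coefficient $a_r-a_{r+1}$, including the boundary term $a_s - a_{s+1} = a_s \ge 0$, is exactly what licenses substituting the upper bound termwise. This is why the hypothesis that the $a_r$ are sorted nonincreasingly and are nonnegative is essential.
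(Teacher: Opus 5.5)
Your proof is correct and follows the paper's argument essentially step for step. Both use summation by parts with the prefix sums $\sum_{q\le r} b_q$ and the convention $a_{s+1}=0$, then bound each prefix sum by $\varepsilon d r$ termwise (valid because $a_r-a_{r+1}\ge 0$), and finish with the identity $\sum_{r=1}^s r(a_r-a_{r+1})=\sum_{r=1}^s a_r$.
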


\begin{proof}
Let
\[
    C_r \;:=\; \sum_{q=1}^r b_q.
\]
By Lemma~\ref{lem:prefix_collision_bound},
\[
    C_r \;\le\; \varepsilon d r.
\]
Set $a_{s+1} = 0$. By summation by parts,
\[
    \sum_{r=1}^s b_r a_r
    \;=\;
    \sum_{r=1}^s C_r (a_r - a_{r+1}).
\]
Therefore
\[
    \sum_{r=1}^s b_r a_r
    \;\le\;
    \varepsilon d
    \sum_{r=1}^s r (a_r - a_{r+1})
    \;=\;
    \varepsilon d
    \sum_{r=1}^s a_r.
\]
\end{proof}

\subsection{Flatness bounds learned weighted cancellations}

The previous subsection treats $\mathbf{M}$ as a combinatorial object and ignores the values of $\mathbf{W}$. This subsection extracts the two consequences of the flatness factor that the main proof needs, both following from $\beta(\mathbf{W})$ together with column normalisation. The first bounds the magnitude of any single colliding entry, and the second lower-bounds the total $\ell_1$-mass of every decoder column.

By definition of $\beta(\mathbf{W})$,
\[
    |\mathbf{W}_{ij}|
    \;\le\;
    \frac{\beta(\mathbf{W})}{\sqrt{d}}
    \qquad
    \text{whenever } \mathbf{M}_{ij} = 1,
\]
so each colliding edge contributes at most
\[
    \frac{\beta(\mathbf{W})}{\sqrt{d}}\,|\mathbf{u}_j|
\]
in absolute value, for any latent vector $\mathbf{u}$.

For the second consequence, column normalisation gives a lower bound on the total $\ell_1$-mass of every decoder column. Since $\|\mathbf{w}_j\|_2 = 1$,
\[
    1
    \;=\;
    \sum_i |\mathbf{W}_{ij}|^2
    \;\le\;
    \|\mathbf{w}_j\|_\infty \|\mathbf{w}_j\|_1
    \;\le\;
    \frac{\beta(\mathbf{W})}{\sqrt{d}}\|\mathbf{w}_j\|_1.
\]
Therefore
\begin{equation}
\label{eq:l1_column_lower}
    \|\mathbf{w}_j\|_1
    \;\ge\;
    \frac{\sqrt{d}}{\beta(\mathbf{W})}.
\end{equation}

Equation~\eqref{eq:l1_column_lower} is the point at which the learned non-binary nature of the SAE decoder enters the proof. In the balanced binary case $\beta(\mathbf{W}) = 1$, recovering the lossless-expander setting of \citet{jafarpour2009efficient}. For a learned decoder, $\beta(\mathbf{W})$ measures how much the weighted argument deviates from that ideal balanced case.

\subsection{Proof of Theorem~\ref{thm:expander_sae_recovery}}

The proof has two stages. The first stage establishes a uniform lower bound on $\|\mathbf{W}\mathbf{u}\|_1$ for every $2k$-sparse vector $\mathbf{u}$, by decomposing the row sums of $\mathbf{W}\mathbf{u}$ into a primary contribution $A$ and a colliding-edge correction $B$, and then applying Lemma~\ref{lem:weighted_collision_bound} together with the column $\ell_1$-bound \eqref{eq:l1_column_lower}. The second stage uses this lower bound to deduce that no two distinct $k$-sparse codes can produce the same centred activation, which gives the identifiability claim.

\begin{proof}
Let
\[
    \beta \;:=\; \beta(\mathbf{W}).
\]
We prove that every $\mathbf{u}$ with $\|\mathbf{u}\|_0 \le 2k$ satisfies
\[
    \|\mathbf{W}\mathbf{u}\|_1
    \;\ge\;
    \sqrt{d}
    \left(
        \frac{1}{\beta} - 2\beta\varepsilon
    \right)
    \|\mathbf{u}\|_1.
\]
The claim is trivial when $\mathbf{u} = 0$, so assume $\mathbf{u} \neq 0$.

Let
\[
    S \;:=\; \operatorname{supp}(\mathbf{u}),
    \qquad
    s \;:=\; |S| \le 2k.
\]
Order the elements of $S$ as
\[
    j_1, \dots, j_s
\]
so that
\[
    |\mathbf{u}_{j_1}|
    \;\ge\;
    |\mathbf{u}_{j_2}|
    \;\ge\;
    \cdots
    \;\ge\;
    |\mathbf{u}_{j_s}|,
\]
and set
\[
    a_r \;:=\; |\mathbf{u}_{j_r}|.
\]
For each row $i \in \Gamma(S)$, let $j_{r(i)}$ be the first column in this ordering that touches row $i$. We call the edge $(i, j_{r(i)})$ primary, and call all later edges touching the same row colliding.

For each row $i$, write
\[
    t_{ij} \;:=\; \mathbf{W}_{ij}\mathbf{u}_j.
\]
By the reverse triangle inequality,
\[
    \left|
        \sum_{j \in S} t_{ij}
    \right|
    \;\ge\;
    |t_{i, j_{r(i)}}|
    \;-\;
    \sum_{\substack{j \in S:\,\mathbf{M}_{ij} = 1 \\ j \neq j_{r(i)}}}
    |t_{ij}|.
\]
Equivalently,
\[
    \left|
        \sum_{j \in S} t_{ij}
    \right|
    \;\ge\;
    \sum_{j \in S:\,\mathbf{M}_{ij} = 1} |t_{ij}|
    \;-\;
    2 \sum_{\substack{j \in S:\,\mathbf{M}_{ij} = 1 \\ (i,j) \text{ colliding}}}
    |t_{ij}|.
\]
Summing this inequality over rows gives
\begin{equation}
\label{eq:collision_decomposition}
    \|\mathbf{W}\mathbf{u}\|_1
    \;\ge\;
    A - 2B,
\end{equation}
where
\[
    A
    \;:=\;
    \sum_{j \in S}
    |\mathbf{u}_j|\,\|\mathbf{w}_j\|_1
\]
is the total absolute decoder mass and
\[
    B
    \;:=\;
    \sum_{\text{colliding edges } (i, j)}
    |\mathbf{W}_{ij}\mathbf{u}_j|
\]
is the total absolute mass on colliding edges.

We first lower-bound $A$. By \eqref{eq:l1_column_lower},
\[
    A
    \;=\;
    \sum_{j \in S} |\mathbf{u}_j|\,\|\mathbf{w}_j\|_1
    \;\ge\;
    \frac{\sqrt{d}}{\beta}
    \sum_{j \in S} |\mathbf{u}_j|
    \;=\;
    \frac{\sqrt{d}}{\beta}
    \|\mathbf{u}\|_1.
\]

We next upper-bound $B$. In the ordering above, the colliding edges of column $j_r$ are contained in
\[
    \Gamma(j_r) \cap \Gamma(\{j_1, \dots, j_{r-1}\}),
\]
and the number of such edges is $b_r$. Since
\[
    |\mathbf{W}_{ij}|
    \;\le\;
    \frac{\beta}{\sqrt{d}},
\]
we have
\[
    B
    \;\le\;
    \sum_{r=1}^s
    b_r a_r
    \frac{\beta}{\sqrt{d}}.
\]
By Lemma~\ref{lem:weighted_collision_bound},
\[
    \sum_{r=1}^s b_r a_r
    \;\le\;
    \varepsilon d
    \sum_{r=1}^s a_r
    \;=\;
    \varepsilon d\,\|\mathbf{u}\|_1,
\]
so
\[
    B
    \;\le\;
    \beta\varepsilon\sqrt{d}\,\|\mathbf{u}\|_1.
\]

Substituting the bounds on $A$ and $B$ into \eqref{eq:collision_decomposition} gives
\[
    \|\mathbf{W}\mathbf{u}\|_1
    \;\ge\;
    \frac{\sqrt{d}}{\beta}\|\mathbf{u}\|_1
    \;-\;
    2\beta\varepsilon\sqrt{d}\,\|\mathbf{u}\|_1,
\]
hence
\[
    \|\mathbf{W}\mathbf{u}\|_1
    \;\ge\;
    \sqrt{d}
    \left(
        \frac{1}{\beta} - 2\beta\varepsilon
    \right)
    \|\mathbf{u}\|_1.
\]
If $2\beta^2\varepsilon < 1$, the coefficient is strictly positive.

To deduce identifiability, let $\mathbf{x}_\star$ be $k$-sparse and suppose $\mathbf{z}$ is another $k$-sparse vector with
\[
    \mathbf{W}\mathbf{z} \;=\; \mathbf{W}\mathbf{x}_\star.
\]
Set
\[
    \mathbf{u} \;:=\; \mathbf{z} - \mathbf{x}_\star.
\]
Then $\|\mathbf{u}\|_0 \le \|\mathbf{z}\|_0 + \|\mathbf{x}_\star\|_0 \le 2k$ and $\mathbf{W}\mathbf{u} = 0$. The lower bound gives
\[
    0
    \;=\;
    \|\mathbf{W}\mathbf{u}\|_1
    \;\ge\;
    \sqrt{d}
    \left(
        \frac{1}{\beta} - 2\beta\varepsilon
    \right)
    \|\mathbf{u}\|_1.
\]
Since the coefficient is positive, $\|\mathbf{u}\|_1 = 0$, and therefore $\mathbf{u} = 0$ and $\mathbf{z} = \mathbf{x}_\star$. So $\mathbf{x}_\star$ is the unique $k$-sparse latent vector producing the noiseless centred activation $\mathbf{W}\mathbf{x}_\star$. Since $\mathbf{x}_\star$ is feasible for the ideal sparse decoding problem and achieves zero reconstruction error, every minimiser of \eqref{eq:decoding} must equal $\mathbf{x}_\star$.
\end{proof}

\subsection{Corollaries}
\label{app:theory_corollaries}

Theorem~\ref{thm:expander_sae_recovery} has three immediate consequences. The first translates the uniform $2k$-sparse lower bound into a spark statement, which gives the unique-sparsest-explanation property of the ideal $\ell_0$ decoder. The second derives a cumulative-coherence bound that activates the classical OMP exact-recovery condition. The third extends the identifiability claim to a stable-recovery statement under bounded $\ell_1$-residual mismatch.

\begin{corollary}[Spark and ideal sparse-decoder recovery]
\label{cor:spark_identifiability}
Under the assumptions of Theorem~\ref{thm:expander_sae_recovery},
\[
    \operatorname{spark}(\mathbf{W}_{\mathrm{dec}}) \;>\; 2k.
\]
Consequently, every $k$-sparse code $\mathbf{x}_\star$ is the unique sparsest explanation of the noiseless centred activation $\mathbf{W}_{\mathrm{dec}}\mathbf{x}_\star$, and the ideal $\ell_0$ decoder in Eq.~\eqref{eq:decoding} recovers $\mathbf{x}_\star$ exactly.
\end{corollary}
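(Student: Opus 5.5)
The corollary follows directly from the lower bound \eqref{eq:weighted_expander_lower_bound} in Theorem~\ref{thm:expander_sae_recovery}, once we recall the definition of spark. The spark of $\mathbf{W}_{\mathrm{dec}}$ is the smallest number of columns that are linearly dependent. Equivalently, it is $\min\{\|\mathbf{u}\|_0 : \mathbf{u}\neq 0,\ \mathbf{W}_{\mathrm{dec}}\mathbf{u}=0\}$. So the first claim reduces to showing that no nonzero vector with at most $2k$ nonzeros lies in the null space of $\mathbf{W}_{\mathrm{dec}}$.

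To prove this, take any $\mathbf{u}\neq 0$ with $\|\mathbf{u}\|_0\le 2k$. The theorem gives $\|\mathbf{W}_{\mathrm{dec}}\mathbf{u}\|_1\ge \sqrt d\,(1/\beta-2\beta\varepsilon)\|\mathbf{u}\|_1$. Condition \eqref{eq:beta_expansion_condition} makes the coefficient strictly positive, and $\|\mathbf{u}\|_1>0$, so $\mathbf{W}_{\mathrm{dec}}\mathbf{u}\neq 0$. Hence every set of at most $2k$ columns is linearly independent, and $\operatorname{spark}(\mathbf{W}_{\mathrm{dec}})>2k$.

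For the consequence, let $\mathbf{x}_\star$ be $k$-sparse. Suppose $\mathbf{z}$ satisfies $\mathbf{W}_{\mathrm{dec}}\mathbf{z}=\mathbf{W}_{\mathrm{dec}}\mathbf{x}_\star$ with $\|\mathbf{z}\|_0\le\|\mathbf{x}_\star\|_0\le k$, which includes every sparsest explanation. Then $\mathbf{z}-\mathbf{x}_\star$ lies in the null space and has at most $2k$ nonzeros, so it must vanish by the spark bound. This is the classical Donoho--Elad uniqueness argument.

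For the ideal $\ell_0$ decoder \eqref{eq:decoding} with centred measurement $\mathbf{h}-\mathbf{b}_{\mathrm{dec}}=\mathbf{W}_{\mathrm{dec}}\mathbf{x}_\star$, the point $\mathbf{x}_\star$ is feasible and attains objective value $0$. Any minimiser must therefore also attain $0$, so it is a $k$-sparse exact explanation, and by the previous step it equals $\mathbf{x}_\star$.

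There is no real obstacle. The only points needing care are that $\beta$ is well defined, since $\beta\ge 1>0$ by unit normalisation so $1/\beta$ makes sense, and that the objective of \eqref{eq:decoding} is taken after subtracting $\mathbf{b}_{\mathrm{dec}}$.
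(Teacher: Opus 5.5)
Your proposal is correct and follows the paper's argument essentially verbatim. The theorem's $\ell_1$ lower bound rules out nonzero $2k$-sparse null vectors, giving $\operatorname{spark}(\mathbf{W}_{\mathrm{dec}})>2k$, and uniqueness then follows by applying this to the difference of two $k$-sparse explanations. Your explicit treatment of the $\ell_0$ minimiser (feasibility of $\mathbf{x}_\star$ at objective zero) matches the final step of the paper's proof of Theorem~\ref{thm:expander_sae_recovery}.
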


\begin{proof}
Theorem~\ref{thm:expander_sae_recovery} gives $\|\mathbf{W}_{\mathrm{dec}}\mathbf{u}\|_1 > 0$ for every nonzero $2k$-sparse vector $\mathbf{u}$, so no set of at most $2k$ decoder columns is linearly dependent and $\operatorname{spark}(\mathbf{W}_{\mathrm{dec}}) > 2k$. The sparse-identifiability implication follows. If two $k$-sparse vectors $\mathbf{x}$ and $\mathbf{z}$ satisfy $\mathbf{W}_{\mathrm{dec}}\mathbf{x} = \mathbf{W}_{\mathrm{dec}}\mathbf{z}$, then $\mathbf{u} = \mathbf{x} - \mathbf{z}$ is $2k$-sparse and lies in the nullspace, so $\mathbf{u} = 0$.
\end{proof}

The second corollary moves from existential identifiability to a constructive recovery guarantee. The classical analysis of \citet{tropp2004greed} certifies OMP exact recovery whenever the cumulative coherence satisfies $\mu_1(k; \mathbf{W}) + \mu_1(k-1; \mathbf{W}) < 1$, and we show that the prefix collision bound of Lemma~\ref{lem:prefix_collision_bound} controls $\mu_1(s; \mathbf{W})$ in our setting.

\begin{corollary}[A sufficient cumulative-coherence condition for OMP]
\label{cor:omp_coherence}
Let $\mathbf{W} = \mathbf{W}_{\mathrm{dec}}$ have unit-normalised columns supported on a left $d$-regular mask $\mathbf{M}$, and let $\beta = \beta(\mathbf{W})$. Define the cumulative coherence
\[
    \mu_1(s; \mathbf{W})
    \;:=\;
    \max_{j \in [n]}\;
    \max_{\substack{T \subset [n] \setminus \{j\} \\ |T| = s}}\;
    \sum_{\ell \in T}
    |\langle \mathbf{w}_j, \mathbf{w}_\ell \rangle|.
\]
If $\mathbf{M}$ is a $(k+1, \varepsilon, d)$-expander, then
\[
    \mu_1(s; \mathbf{W}) \;\le\; \beta^2 \varepsilon (s + 1)
    \qquad \text{for every } s \le k.
\]
In particular, if
\begin{equation}
\label{eq:omp_coherence_condition}
    \beta^2 \varepsilon (2k + 1) \;<\; 1,
\end{equation}
then $\mu_1(k; \mathbf{W}) + \mu_1(k-1; \mathbf{W}) < 1$, so the standard cumulative-coherence exact-recovery condition for OMP applies~\citep{tropp2004greed}. In the noiseless model $\mathbf{h} = \mathbf{b}_{\mathrm{dec}} + \mathbf{W}\mathbf{x}_\star$, OMP then recovers the support of every $k$-sparse $\mathbf{x}_\star$ in $k$ steps. This is the statement quoted in the main text as Corollary~\ref{cor:omp_main}.
\end{corollary}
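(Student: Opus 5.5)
The plan is to bound each inner product $\langle \mathbf{w}_j,\mathbf{w}_\ell\rangle$ entrywise using flatness, which reduces the cumulative coherence to a purely combinatorial overlap count on the mask. That count is then controlled by Lemma~\ref{lem:prefix_collision_bound}, using an ordering that places $j$ first.

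\textbf{Step 1 (entrywise bound).} Fix $j\in[n]$ and $T\subset[n]\setminus\{j\}$ with $|T|=s\le k$. Since both columns are supported on the mask, $\langle \mathbf{w}_j,\mathbf{w}_\ell\rangle=\sum_{i\in\Gamma(j)\cap\Gamma(\ell)}\mathbf{W}_{ij}\mathbf{W}_{i\ell}$. The flatness bound $|\mathbf{W}_{ij}|\le\beta/\sqrt d$ on mask entries then gives
\[
\sum_{\ell\in T}|\langle \mathbf{w}_j,\mathbf{w}_\ell\rangle|\;\le\;\frac{\beta^2}{d}\sum_{\ell\in T}|\Gamma(j)\cap\Gamma(\ell)|.
\]
So it suffices to show $\sum_{\ell\in T}|\Gamma(j)\cap\Gamma(\ell)|\le \varepsilon d(s+1)$.

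\textbf{Step 2 (combinatorial count).} Take $S=\{j\}\cup T$, which has $|S|=s+1\le k+1$. Order it as $j_1=j$ followed by the elements of $T$ in any order $j_2,\dots,j_{s+1}$. For $r\ge2$, we have $\Gamma(j)\subseteq\Gamma(S_{r-1})$, so
\[
\Gamma(j)\cap\Gamma(j_r)\subseteq \Gamma(j_r)\cap\Gamma(S_{r-1})=B_r,
\]
and hence $|\Gamma(j)\cap\Gamma(j_r)|\le b_r$. Summing over $r$ and using $b_1=0$, Lemma~\ref{lem:prefix_collision_bound} for the $(k+1,\varepsilon,d)$-expander gives $\sum_{r=2}^{s+1}b_r\le\varepsilon d(s+1)$. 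Combined with Step 1 this yields $\mu_1(s;\mathbf{W})\le\beta^2\varepsilon(s+1)$ for every $s\le k$.

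\textbf{Step 3 (OMP condition).} Applying Step 2 at $s=k$ and $s=k-1$ gives
\[
\mu_1(k;\mathbf{W})+\mu_1(k-1;\mathbf{W})\le\beta^2\varepsilon\bigl((k+1)+k\bigr)=\beta^2\varepsilon(2k+1)<1,
\]
which is Tropp's exact recovery condition for unit-norm dictionaries~\citep{tropp2004greed}. In the noiseless model, Algorithm~\ref{alg:omp} first subtracts $\mathbf{b}_{\mathrm{dec}}$, so it runs on the centred signal $\mathbf{W}\mathbf{x}_\star$ with the absolute-value selection rule. Tropp's theorem then implies that each step selects an index from $\supp(\mathbf{x}_\star)$. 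After $k$ steps the full support is recovered and the least-squares refit returns $\mathbf{x}_\star$.

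The main obstacle is Step 2. The naive choice of ordering ($T$ first, then $j$) fails: the collision count $b$ for $j$ counts \emph{distinct} rows of $\Gamma(j)$ already covered, whereas the coherence sum counts overlaps with multiplicity. Placing $j$ first makes each overlap $\Gamma(j)\cap\Gamma(j_r)$ a subset of the collision set $B_r$ of a different column $j_r$, so the multiplicities are absorbed by the prefix sum. This is why the argument needs expansion at size $k+1$ rather than $k$.
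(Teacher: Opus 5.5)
Your proof is correct and follows the paper's argument: flatness reduces each inner product to a mask-overlap count, ordering $\{j\}\cup T$ with $j$ first lets the prefix collision bound at size $s+1\le k+1$ control the sum, and Tropp's condition at $s=k$ and $s=k-1$ finishes. Your inclusion $\Gamma(j)\cap\Gamma(j_r)\subseteq B_r$ spells out the step the paper only asserts. One small correction to your closing remark: $\sum_r b_r = d(s+1)-|\Gamma(\{j\}\cup T)|$ does not depend on the ordering, so putting $j$ first is a convenience rather than a necessity, and the size $k+1$ comes from $|\{j\}\cup T|=s+1$, not from the choice of ordering.
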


\begin{proof}
For two columns $j \neq \ell$, let
\[
    c_{j\ell} \;=\; |\operatorname{supp}(\mathbf{w}_j) \cap \operatorname{supp}(\mathbf{w}_\ell)|.
\]
Since every nonzero decoder entry has magnitude at most $\beta / \sqrt{d}$,
\[
    |\langle \mathbf{w}_j, \mathbf{w}_\ell \rangle|
    \;\le\;
    \frac{\beta^2}{d}\, c_{j\ell}.
\]
Fix $j$ and $T$ with $|T| = s \le k$, and order $\{j\} \cup T$ with $j$ first. By the prefix collision bound (Lemma~\ref{lem:prefix_collision_bound}, applied at $r = s + 1 \le k + 1$), the total number of collision edges created by the $s + 1$ columns is at most $\varepsilon d (s + 1)$. Since $\sum_{\ell \in T} c_{j\ell}$ is bounded by this collision count,
\[
    \sum_{\ell \in T} |\langle \mathbf{w}_j, \mathbf{w}_\ell \rangle|
    \;\le\;
    \beta^2 \varepsilon (s + 1).
\]
Taking the maximum over $j$ and $T$ gives the cumulative-coherence bound, and the final statement follows from the standard OMP sufficient condition $\mu_1(k; \mathbf{W}) + \mu_1(k - 1; \mathbf{W}) < 1$~\citep{tropp2004greed}.
\end{proof}

The third corollary upgrades identifiability to a quantitative stability statement. Under the assumptions of Theorem~\ref{thm:expander_sae_recovery} the lower bound \eqref{eq:weighted_expander_lower_bound} holds with constant $c = \sqrt{d}(1/\beta - 2\beta\varepsilon) > 0$, and we show that this constant directly controls how the $\ell_1$-distance between two candidate $k$-sparse codes scales with the $\ell_1$-residual between their reconstructions.

\begin{corollary}[Stability under bounded residual mismatch]
\label{cor:stability}
Let $c = \sqrt{d}(1/\beta - 2\beta\varepsilon) > 0$ be the constant from Theorem~\ref{thm:expander_sae_recovery}. Under the assumptions of that theorem, suppose $\mathbf{x}_\star$ is $k$-sparse and $\mathbf{z}$ is $k$-sparse with
\[
    \|\mathbf{W}\mathbf{z} - \mathbf{W}\mathbf{x}_\star\|_1 \;\le\; \eta.
\]
Then
\[
    \|\mathbf{z} - \mathbf{x}_\star\|_1 \;\le\; \eta / c.
\]
\end{corollary}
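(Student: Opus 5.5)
The plan is to apply the uniform lower bound \eqref{eq:weighted_expander_lower_bound} of Theorem~\ref{thm:expander_sae_recovery} to the difference vector, exactly as in the identifiability step of that proof, but keeping the quantitative constant rather than only using positivity. Set $\mathbf{u} := \mathbf{z} - \mathbf{x}_\star$. Since both $\mathbf{z}$ and $\mathbf{x}_\star$ are $k$-sparse, $\operatorname{supp}(\mathbf{u}) \subseteq \operatorname{supp}(\mathbf{z}) \cup \operatorname{supp}(\mathbf{x}_\star)$, so $\|\mathbf{u}\|_0 \le 2k$. This is precisely the sparsity level at which the theorem's $(2k,\varepsilon,d)$-expansion hypothesis and the flatness condition $2\beta^2\varepsilon<1$ deliver the bound, so no new combinatorial or flatness estimate is needed.

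Next I would use linearity of $\mathbf{W}$: $\mathbf{W}\mathbf{u} = \mathbf{W}\mathbf{z} - \mathbf{W}\mathbf{x}_\star$, hence by hypothesis $\|\mathbf{W}\mathbf{u}\|_1 \le \eta$. Invoking \eqref{eq:weighted_expander_lower_bound} for this $2k$-sparse $\mathbf{u}$ gives
\[
    c\,\|\mathbf{u}\|_1 \;\le\; \|\mathbf{W}\mathbf{u}\|_1 \;\le\; \eta,
    \qquad c = \sqrt{d}\left(\frac1\beta - 2\beta\varepsilon\right).
\]
Since $c>0$ by the standing assumption $2\beta^2\varepsilon<1$, dividing by $c$ yields $\|\mathbf{z}-\mathbf{x}_\star\|_1 \le \eta/c$, which is the claim. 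Note also that if a bias $\mathbf{b}_{\mathrm{dec}}$ were present it cancels in the difference of the two reconstructions, so the statement in centred or uncentred form is the same.

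There is essentially no hard step here; the only point requiring care is confirming that the difference of two $k$-sparse vectors is covered by the theorem, i.e.\ that the lower bound was proved for all vectors of sparsity at most $2k$ (not exactly $2k$), which is how the proof of Theorem~\ref{thm:expander_sae_recovery} is stated ($s := |S| \le 2k$, with the expander property holding for all $|S|\le 2k$). As a sanity check, setting $\eta=0$ recovers the identifiability conclusion of the theorem.
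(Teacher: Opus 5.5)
Your proposal is correct and follows the paper's proof exactly: you set $\mathbf{u}=\mathbf{z}-\mathbf{x}_\star$, note that it is $2k$-sparse, apply \eqref{eq:weighted_expander_lower_bound} to get $c\|\mathbf{u}\|_1\le\|\mathbf{W}\mathbf{u}\|_1\le\eta$, and divide by $c>0$. Your remarks on the bias cancelling and on the theorem covering sparsity \emph{at most} $2k$ are accurate.
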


\begin{proof}
The vector $\mathbf{u} = \mathbf{z} - \mathbf{x}_\star$ is $2k$-sparse, so the lower bound \eqref{eq:weighted_expander_lower_bound} gives $\|\mathbf{W}\mathbf{u}\|_1 \ge c\|\mathbf{u}\|_1$. Combining with $\|\mathbf{W}\mathbf{u}\|_1 \le \eta$ yields $\|\mathbf{u}\|_1 \le \eta / c$.
\end{proof}

\end{document}